\documentclass{article}

\usepackage{microtype}
\usepackage{graphicx}
\usepackage{subcaption}
\usepackage{booktabs} 
\usepackage{float}
\usepackage{xcolor}
\definecolor{task1color}{RGB}{0,128,0}      
\definecolor{task2color}{RGB}{255,165,0}    
\definecolor{task3color}{RGB}{0,255,255} 
\usepackage{hyperref}

 \usepackage[preprint]{icml2026}

\usepackage{amsmath}
\usepackage{amssymb}
\usepackage{mathtools}
\usepackage{amsthm}

\usepackage{microtype}
\usepackage{graphicx}
\usepackage{booktabs}
\usepackage{amsmath}
\usepackage{amssymb}
\usepackage{amsthm}
\usepackage{tikz}
\usetikzlibrary{arrows.meta,backgrounds,fit,decorations.pathreplacing,patterns}
\usepackage{xcolor}

\usepackage{cite}
\usepackage{amsmath,amsfonts}

\usepackage{pifont}
\usepackage{balance}
\usepackage{multirow}
\usepackage{amsmath}
\usepackage{array}
\usepackage{comment}
\usepackage{wrapfig}
\usepackage{adjustbox}
\usepackage{tabularx}
\usepackage{enumitem}
\usepackage{placeins} 
\usepackage{array} 
\usepackage{float}
\usepackage[table,xcdraw]{xcolor}
\usepackage{textcomp}
\usepackage{xcolor}
\usepackage{array}
\usepackage{graphicx}
\usepackage{multirow}
\usepackage{array}
\usepackage{lscape,lipsum}
\usepackage{threeparttable}
\usepackage{longtable}
\usepackage{multirow}
\usepackage{makecell}
\usepackage{etoolbox}
\usepackage{pgfplots} 
\usepackage{adjustbox}
\usepackage{tikz}
\usepackage{graphicx, adjustbox}

\usepackage{mathtools}
\usepackage{multirow}
\usepackage{array}
\usepackage{booktabs}
\usepackage[usestackEOL]{stackengine}
\usepackage{xfrac}
\usepackage{threeparttable}
\usepackage{pdflscape}
\usepackage{tkz-euclide,amsmath}
\usetikzlibrary{shapes.geometric}
\usetikzlibrary{3d}
\usepackage{mathtools}
\graphicspath{{./Images/}}

\usepackage{xcolor}
\colorlet{myred}{red!80!black}
\colorlet{myblue}{blue!80!black}
\colorlet{mygreen}{green!60!black}
\colorlet{mydarkblue}{blue!40!black}

\usepackage[capitalize,noabbrev]{cleveref}

\theoremstyle{plain}
\newtheorem{theorem}{Theorem}[section]

\theoremstyle{definition}

\theoremstyle{remark}

\newtheorem{axiom}{Axiom}
\usepackage[textsize=tiny]{todonotes}

\begin{document}

\twocolumn[
  \icmltitle{Shapley Neuron Values for Continual Learning: Which Neurons Matter Most?}



  \icmlsetsymbol{equal}{*}

  \begin{icmlauthorlist}
    \icmlauthor{Mohammad Ali Vahedifar}{yyy}
    \icmlauthor{Abhisek Ray}{yyy}
    \icmlauthor{Qi Zhang}{yyy}\\
  \href{https://github.com/Ali-Vahedifar/Shapley-Neuron-Values}{GitHub Code}
  \end{icmlauthorlist}

  \icmlaffiliation{yyy}{DIGIT and Department of Electrical and Computer Engineering, Aarhus University, Denmark}

  \icmlcorrespondingauthor{Mohammad Ali Vahedifar}{av@ece.au.dk} \icmlcorrespondingauthor{Qi Zhang}{qz@ece.au.dk}

  \icmlkeywords{Machine Learning, ICML}

  \vskip 0.3in
]



\printAffiliationsAndNotice{}  

\begin{abstract}

Continual learning enables neural networks to learn tasks sequentially without forgetting previously acquired knowledge. However, neural networks suffer from catastrophic forgetting, where learning new tasks degrades performance on earlier ones. We address this problem with \textbf{Shapley Neuron Valuation (SNV)}, a principled framework that quantifies Neuron importance in continual learning, grounded in cooperative game theory. SNV selectively freezes important Neurons while keeping others plastic, enabling \textbf{buffer-free} continual learning without expanding architecture. Experiments on ImageNet-1k show that SNV consistently outperforms existing buffer-free methods. In particular, SNV improves accuracy by \textbf{+2.88\%} in the class incremental learning and \textbf{+6.46\%} in the task incremental learning scenarios compared to the second baseline.

\end{abstract}

\section{Introduction}
\label{sec:intro}
\begin{figure*}[t]
    \centering
    \includegraphics[width=\linewidth]{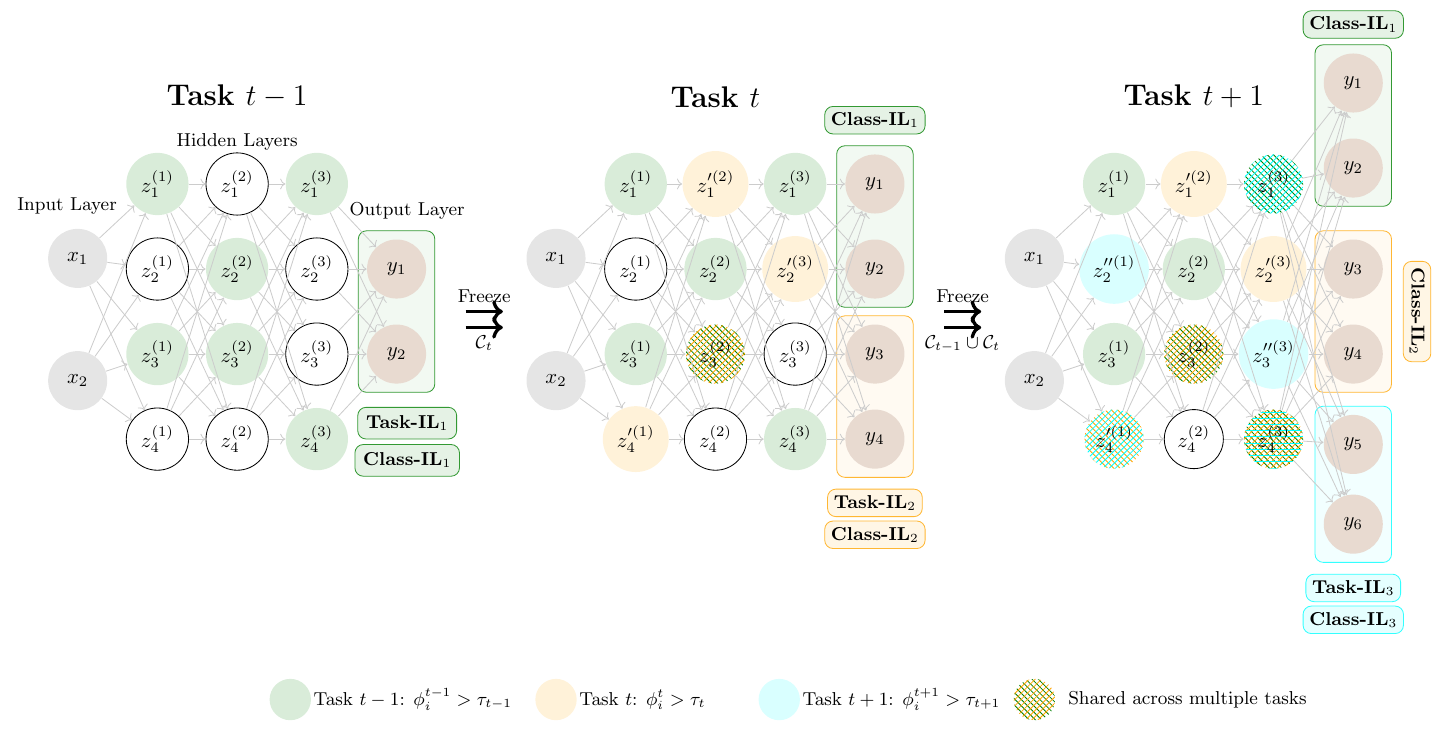}
    \caption{ An illustration of Shapley Neuron Values where for each task \textcolor{task1color}{task $t-1$}, \textcolor{task2color}{task $t$}, and \textcolor{task3color}{task $t+1$}, we identify and freeze the Neurons whose Shapley Values fall within the top $r\%$. Neurons marked with specific colors indicate that the same Neuron appears within the top $r\%$ for $t$ specific tasks. $\tau_i$ is each task's top $r\%$ threshold.}
    \label{fig: SHapley Value}
\end{figure*}
Sequential training of neural networks often leads to catastrophic forgetting, where learning new information causes the model to lose previously acquired knowledge. This challenge is especially pronounced when models are exposed to data over time rather than all at once \citep{vahedifar_2026_18774099}. Continual learning addresses this issue by providing a framework that learns sequential data across different tasks while preserving prior knowledge. As a result, models can adapt to new tasks without compromising their performance on earlier ones \citep{wang2024comprehensive}.

Prior research on continual learning primarily falls into three categories: \textit{regularization-based, memory-based}, and \textit{dynamic architecture} approaches. (i) Regularization-based methods, such as Elastic Weight Consolidation (EWC) \citep{kirkpatrick2017overcoming}, Synaptic Intelligence (SI) \citep{zenke2017continual}, and Learning without Forgetting (LwF) \citep{li2017learningforgetting} constrain parameter updates to reduce interference with previously learned tasks. (ii) Memory-based approaches, including Incremental Classifier and Representation Learning (iCaRL) \citep{rebuffi2017icarl}, Dark Experience Replay (DER++) \citep{buzzega2020dark}, Pooled Outputs Distillation 
Network (PODNet) \citep{Douillard_PODNET}, CO-transport for class Incremental Learning (COIL) \citep{ZhouCOIL}, and Gradient Episodic Memory (GEM) \citep{Lopez-PazNeurIPS2017_f8752278}, mitigate forgetting by storing and replaying examples or their representations from past tasks. (iii) Dynamic architecture methods, such as Progressive Neural Networks (PNN) \citep{rusu2022progressiveneuralnetworks}, DYnamic TOken eXpansion (DyTox) \citep{Douillard_DyTox}, and Memory-efficient Expandable MOdel (MEMO) \citep{zhou2023model}, expand the model over time to preserve task-specific representations.

Despite their success, memory-based methods violate the strict continual learning setting, where the current task’s data can only be accessible, as in~\citep{rebuffi2017icarl, buzzega2020dark, Douillard_DyTox, zhou2023model}.  As a result, it raises concerns about data privacy and the growing memory storage cost. On the other hand, dynamic architecture methods expand the model as tasks accumulate, leading to unbounded parameter and memory growth.  Such expansion becomes computationally impractical for large-scale settings or long task sequences, as in~\citep{buzzega2020dark, rusu2022progressiveneuralnetworks}. Even regularization-based methods, although buffer-free, often require additional task-specific modules (e.g., task classifiers, bias correction branches), auxiliary losses, or architectural constraints. Their performance deteriorates when tasks become highly heterogeneous, as in~\citep{kirkpatrick2017overcoming,vahedifar2026forgettinglearningbufferfreecontinual}. The critical question is: \textit{How can we achieve forget-resistant continual learning method within the fixed capacity of the backbone network without revisiting past data, and without growing the architecture?}

This work is motivated by the observation that modern neural networks are highly over-parameterized, containing large reservoirs of redundant capacity~\citep{LTH}. Rather than adding new parameters or storing past samples, we argue that a fixed-capacity model already contains sufficient internal degrees of freedom to encode long task sequences, provided that the most important Neurons for each task can be reliably identified and structurally preserved. We propose \textbf{Shapley Neuron Values (SNV)}, a buffer-free, without architecture expansion continual learning framework that leverages the inherent over-parameterization of neural networks to identify expert subnetworks for each task. Our key insight is to identify and freeze the most influential Neurons for each task using a principled valuation mechanism based on Shapley Values from cooperative Game Theory (GT). This ensures a clear separation between:
\textbf{Stable Neurons} that encode knowledge from previous tasks.
and \textbf{Plastic Neurons} that remain free to learn new tasks.

\textbf{Related work.} Due to space constraints, we defer our review of prior works to Appendix~\ref{RW}. There, we provide an extended discussion of continual learning approaches.

\section{Shapley Neuron Values}


The exploration of sparse subnetworks offers a compelling paradigm for addressing catastrophic forgetting within a continual learning framework. The methodological core involves systematically ascertaining the most salient Neurons essential for the current task and freezing their corresponding weights, as shown in Fig.~\ref{fig: SHapley Value}. This preservation mechanism effectively stabilizes the knowledge pertinent to previously learned tasks, ensuring requisite stability. Simultaneously, the remaining pool of Neurons' capacity retains plasticity, allowing for the flexible representation of subsequent tasks. This stability–plasticity trade-off disentangles knowledge retention from new task learning by reducing cross-task interference. By maintaining past information stability while allowing for flexible adaptation to new tasks, the system supports both scalable and sequential learning.

\textbf{Problem Statement.} Consider a supervised learning setup where, $\mathcal{T}=\{T_t\}^{T}_{t=1}$ tasks arrive sequentially corresponding to $\mathcal{D}=\{\mathcal{D}_t\}^{T}_{t=1}$ data. Each task $t$ has dataset $\mathcal{D}_t = \{(x_j^t, y_j^t)\}_{j=1}^{k_t}$ with $k_t$ samples. In continual learning, when learning task $t$, only $\mathcal{D}_t$ is accessible.  Since modern architectures (e.g., ResNet) are predominantly convolutional, we define the fundamental unit of importance hereafter referred to as a ``Neuron" as a convolutional filter (or kernel). Let the network $\mathcal{N}$ consist of $L$ layers, where the $l$-th layer contains $C_l$ filters. The total set of Neurons is $\mathcal{M} = \{m_i\}_{i=1}^{N}$, where $N = \sum_{l=1}^L C_l$ is the total number of filters in the network. In our setting, $V$ is a black box that takes any network as input and returns a score, such as accuracy, loss, or disparity. The performance on the full model is denoted as $V(\mathcal{M})$. Mathematically, we want to partition the overall performance metric $V(\mathcal{M})$ among model Neurons such that $\sum_{i=1}^{N} \phi_i(V,\mathcal{M}) = V(\mathcal{M})$, where $\phi_i$ is Neuron values.

To assess the contribution of individual Neurons in the model, we mask selected components by replacing a filter's output with its mean response over a set of validation data. This procedure blocks the flow of information through that filter while preserving the average statistics of the signal passed to subsequent layers, which would not be the case if the output were replaced with zeros.  We consider subsets $ \mathcal{S} \subseteq \mathcal{M}$  representing subnetworks, and write $V(\mathcal{S})$ to denote the model's performance after all Neurons in $\mathcal{M} \setminus \mathcal{S} $ have been replaced by their mean activations. Note that $V(\emptyset)$ corresponds to all Neurons replaced by their mean responses, yielding a well-defined baseline. The model is not retrained after this modification; all parameters remain fixed, and we directly evaluate the test performance \( V(\mathcal{S}) \). This choice avoids the computational cost that would arise from fine-tuning the network for every possible subset \( \mathcal{S} \).

Given network $\mathcal{N}$, we seek a Neuron importance-based mask $S_t \in \{0,1\}^N$ for task $T_t$. The masked parameters are obtained via broadcasting: $\mathcal{M} \odot S_t=\{w_1.S_t^{(1)},...,w_N.S_t^{(N)}\}$. We introduce a sparsity ratio $c \in (0,1)$ (e.g., $c = 1/T$) to limit the network capacity allocated to each task. The optimization problem is:
\begin{equation}
\begin{split}
S_t^* = \arg&\min_{S_t \in \{0,1\}^{N}} \frac{1}{k_t} \sum_{j=1}^{k_t} \mathcal{L}(f(x_j^t; \mathcal{M} \odot S_t), y_j^t),\\
& s.t \quad \|S_t\|_0 \leq \lfloor c \cdot N \rfloor.
\end{split}
\end{equation}
Here, $\|S_t\|_0$ denotes the $L_0$ norm (count of non-zero elements) of the mask, and $N$ is the total number of filters in the network. The constraint ensures that the number of active Neurons does not exceed the budget defined by $c$. Solving this combinatorial optimization exactly is intractable. Instead, we compute the Shapley Neuron Value $\phi_i$ for each Neuron and construct the mask by selecting the $\lfloor c \cdot N \rfloor$ Neurons ranked by $\phi_i$, i.e., $S_t^*(i) = 1$ if and only if $\phi_i$ is among the $\lfloor c \cdot N \rfloor$ largest values. Thus, a subnetwork for each task is obtained by $\mathcal{M}^*=\mathcal{M} \odot S_t^*$.

\subsection{Neuron Valuation Framework}\label{Neuron Valuation Framework}
Our goal is to compute a Neuron value $\phi_i( V, \mathcal{M}) \in \mathbb{R}$ to quantify the importance of the $i$-th Neuron. For simplicity, we use $\phi_i$ to denote $\phi_i(V,\mathcal{M})$. We define that $\phi_i$ should satisfy the following properties:

\begin{axiom}[Efficiency]
The total value is distributed among all Neurons:
\begin{equation}
\sum_{i \in \mathcal{M}} \phi_i = V(\mathcal{M}).
\end{equation}
\end{axiom}

\begin{axiom}[Null Contribution]
If adding a specific Neuron does not improve performance, no matter which subset it is added to, then it should have zero value:
\begin{equation}
\forall \mathcal{S} \subseteq \mathcal{M} \setminus \{i\}, V(\mathcal{S}) = V(\mathcal{S} \cup \{i\}) \Rightarrow \phi_i = 0.
\end{equation}
\end{axiom}
Note, $\mathcal{M}\setminus \{i\}$ means a set of Neurons without Neuron $m_i$. This applies to the rest of the equations.
\begin{axiom}[Symmetry]
If two Neurons contribute equally to the model, then their values must be the same. For Neurons $i$ and $j$ and any $\mathcal{S} \subseteq \mathcal{M} \setminus \{i,j\}$:
\begin{equation}
V(\mathcal{S} \cup \{i\}) = V(\mathcal{S} \cup \{j\}) \Rightarrow \phi_i = \phi_j.
\end{equation}
\end{axiom}

\begin{axiom}[Linearity]
For a given Neuron, its effect on the sum of two functions equals the sum of its individual effects on each function.
\begin{equation}
\phi_i(V_1 + V_2) = \phi_i(V_1) + \phi_i(V_2).
\end{equation}
\end{axiom}
The contribution formula that uniquely satisfies all these axioms is defined in the following Theorem.
\begin{theorem}[Shapley Neuron Value]
Any Neuron valuation $\phi_i(V,\mathcal{M} )$ satisfying the above-mentioned Axioms must have the form:
\begin{equation}\label{eq: Shapley}
\phi_i = \sum_{\mathcal{S} \subseteq \mathcal{M} \setminus \{i\}} \frac{|\mathcal{S}|!(|\mathcal{M}|-|\mathcal{S}|-1)!}{|\mathcal{M}|!}\big[V(\mathcal{S} \cup \{i\}) - V(\mathcal{S})\big],
\end{equation}
where $\phi_i$ is called the ``Shapley Neuron Value" of Neuron $m_i$.
\end{theorem}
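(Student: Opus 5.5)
The argument is the classical Shapley uniqueness proof, transcribed to Neuron games. We regard $V$ as an element of the vector space $\mathcal{G}$ of all set functions $V:2^{\mathcal{M}}\to\mathbb{R}$, since $V$ is an arbitrary black-box score evaluated on masked subnetworks. Two facts about this space will be used. First, we normalize the game by subtracting the constant $V(\emptyset)$. The baseline $V(\emptyset)$ is well defined here (all Neurons mean-replaced), and the Efficiency axiom should be read as distributing $V(\mathcal{M})-V(\emptyset)$. A constant game has every Neuron null, so this normalization is harmless. Second, $\mathcal{G}_0=\{V:V(\emptyset)=0\}$ has as a basis the unanimity games $u_R(\mathcal{S})=\mathbf{1}[R\subseteq\mathcal{S}]$ for nonempty $R\subseteq\mathcal{M}$. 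To prove the basis claim, we exhibit the Möbius expansion $V=\sum_{R\neq\emptyset}c_R u_R$ with $c_R=\sum_{T\subseteq R}(-1)^{|R|-|T|}V(T)$. This follows from the identity $\sum_{T\subseteq R\subseteq \mathcal{S}}(-1)^{|R|-|T|}=\mathbf{1}[T=\mathcal{S}]$. A dimension count ($2^N-1$ basis games against $2^N-1$ free values) then gives linear independence.

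Next we show that the axioms pin down $\phi$ on each basis element $c\,u_R$, for any scalar $c$. Any $i\notin R$ is null, because adding $i$ never changes whether $R\subseteq\mathcal{S}$; hence $\phi_i=0$. Any two $i,j\in R$ are symmetric in the sense of the Symmetry axiom, since $V(\mathcal{S}\cup\{i\})=V(\mathcal{S}\cup\{j\})=0$ whenever $\mathcal{S}\not\supseteq R\setminus\{i,j\}$. Efficiency then forces $\phi_i(c\,u_R)=c/|R|$ for $i\in R$. Linearity, extended from additivity to the finite sum $V=\sum c_R u_R$, yields $\phi_i(V)=\sum_{R\ni i}c_R/|R|$. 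So there is at most one valuation satisfying the axioms.

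The main obstacle is scalar multiples. The Linearity axiom as stated only gives additivity, so the scaled games $c\,u_R$ are not directly covered by linearity. I plan to sidestep this as follows: the argument above applies verbatim to each $c\,u_R$, because the null and symmetry structure is unchanged by scaling. Additivity is then needed only across distinct basis terms. If that fails, one can instead invoke homogeneity under rational scaling from additivity, which extends to the reals with a mild regularity assumption.

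Finally, we check that formula \eqref{eq: Shapley} meets the axioms, which also identifies it as the unique solution. It is visibly linear. Null Neurons get zero because every marginal term vanishes. For symmetry, swapping $i$ and $j$ is a bijection of the subsets $\mathcal{S}$ that preserves the weights. For efficiency, rewrite the formula as the average over all $|\mathcal{M}|!$ orderings of the marginal contribution of $i$ to its predecessors. The weight $|\mathcal{S}|!(|\mathcal{M}|-|\mathcal{S}|-1)!/|\mathcal{M}|!$ is exactly the fraction of orderings in which the predecessors of $i$ are precisely $\mathcal{S}$. Within each ordering the marginal contributions telescope to $V(\mathcal{M})-V(\emptyset)$. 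Since both formula \eqref{eq: Shapley} and the axiomatic solution are determined by their values on the basis games, it also suffices, and is a direct sanity check, to verify that the formula gives $c/|R|$ on $c\,u_R$. This completes the uniqueness and existence argument.
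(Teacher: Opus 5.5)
Your proof is correct and is exactly the classical unanimity-game proof of Shapley's uniqueness theorem. The paper's proof only invokes that theorem by citation, after casting Neurons as players and $V$ as the characteristic function, so you follow the same route and simply supply the details it omits. Your rereading of Efficiency as distributing $V(\mathcal{M})-V(\emptyset)$ is actually necessary rather than just harmless: the formula sums to $V(\mathcal{M})-V(\emptyset)$, and a nonzero constant game makes the axioms as literally stated unsatisfiable (the paper's citation silently assumes this normalization). Also, in your symmetry check, $u_R(\mathcal{S}\cup\{i\})=u_R(\mathcal{S}\cup\{j\})=0$ holds for every $\mathcal{S}\subseteq\mathcal{M}\setminus\{i,j\}$, not only when $\mathcal{S}\not\supseteq R\setminus\{i,j\}$, because $\mathcal{S}\cup\{i\}$ always omits $j\in R$.
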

\begin{proof} The expression in Eq.~\ref{eq: Shapley} is identical to the Shapley value defined in GT~\citep{shapley:book1952}, which motivates the term Shapley Neuron Value. This follows from the fact that the Shapley Value is the unique solution satisfying a specific set of axioms, and the Neuron valuation problem can be cast into the same mathematical framework. In this view, Neuron valuation is a cooperative game in which Neurons act as players and model performance serves as the payoff. Different subsets of Neurons yield different prediction performance, analogous to how different coalitions of players achieve varying rewards. The Shapley Neuron Value then assigns each Neuron a fair contribution, following the same axiomatic principles that govern the original Shapley Value.
\end{proof}
Importantly, unlike binary importance methods that assign $\{0,1\}$ scores to Neurons (e.g., WSN), the Shapley Neuron Values $\phi_i \in \mathbb{R}$ provide a continuous importance ranking. This continuous valuation enables principled Neuron selection under tight capacity budgets $c$, where the relative ordering among candidate Neurons becomes critical. The final task mask $S_t \in \{0,1\}^N$ is then derived by thresholding this continuous ranking at the budget $c$.

\subsection{Cumulative Shapley Mask}
To prevent catastrophic forgetting, we selectively update model parameters by allowing gradients to flow only through Neurons that have not been allocated to previous tasks. Neurons identified as \textit{important} for earlier tasks are frozen to preserve the acquired knowledge. We define an accumulated binary mask at the Neuron level for task $t$:
\begin{equation}
    \mathbf{B}_{t-1} = \bigcup_{i=1}^{t-1} S_i,
\end{equation}
where $\mathbf{B}_{t-1} \in \{0,1\}^N$ marks the set of all filters used in tasks $1$ through $t-1$. To apply this at the parameter level, let $\theta$ denote the set of all trainable weights in the network. We define a parameter-level freezing mask $\mathbf{M}_{t-1}$ of the same dimension as $\theta$. For each weight $\theta_j$, the mask is defined as:
\begin{equation}
(\mathbf{M}_{t-1})_j = 
\begin{cases} 
0 & \text{if weight } \theta_j \in m_i \text{ where } (\mathbf{B}_{t-1})_i = 1 \\
1 & \text{otherwise.}
\end{cases}
\end{equation}
For an optimizer with learning rate $\eta$, the parameter update rule for task $t$ is:
\begin{equation}
\theta \leftarrow \theta - \eta \left( \frac{\partial \mathcal{L}}{\partial \theta} \odot \mathbf{M}_{t-1} \right).
\end{equation}
This formulation ensures that the gradient $\frac{\partial \mathcal{L}}{\partial \theta_j}$ is zeroed out for any weight belonging to a frozen subnetwork.
\subsection{Estimating Shapley Neuron Values}
Although Neuron Shapley has desirable theoretical properties, its exact computation is prohibitively expensive. Because the number of possible subsets \( \mathcal{S} \) grows exponentially, computing the Shapley Value for a single Neuron requires an exponential number of operations. Inspired by~\citep{Neuronghorbani}, we approximate Shapley Values through:

\textbf{i. Monte Carlo Estimation:}
For a model with $|\mathcal{M}|$ elements, the Shapley value of the \( i \)-th component can be written as:
\begin{equation}
    \phi_i = \mathbb{E}_{\pi \sim \Pi}\left[\, V(\mathcal{S}_i^\pi \cup \{i\}) - V(\mathcal{S}_i^\pi) \,\right],
\end{equation}
where \( \Pi \) denotes the uniform distribution over all $|\mathcal{M}|!$ permutations of the model elements, and \( \mathcal{S}_i^\pi \) is the set of elements appearing before element \( i \) in permutation \( \pi \) (or the empty set if \( i \) is first). Approximating \( \phi_i \) thus reduces to estimating the mean of a random variable. For a chosen error bound, Monte Carlo estimation provides an unbiased estimator of \( \phi_i \).

\textbf{ii. Truncation:}
The main computational cost in the expression above comes from evaluating the marginal contribution:
\begin{equation}
V(\mathcal{S}_i^\pi \cup \{i\}) - V(\mathcal{S}_i^\pi).
\end{equation}
When \( \mathcal{S}_i^\pi \) is small, the model's performance \( V(\mathcal{S}_i^\pi) \)  degrades toward zero due to the removal of many network filters. Therefore, we skip marginal computations for early elements in a sampled permutation \( \pi \) when \( \mathcal{S}_i^\pi \) is below a predefined performance threshold, treating the model as effectively non-functional. This truncation yields significant computational savings, often close to an order of magnitude.

\textbf{iii. Multi-Armed Bandit:}
The objective is to reliably identify the top-$k$ Neurons. Instead of computing the marginal contribution for every Neuron at each iteration, we restrict sampling to those Neurons whose current confidence intervals still overlap with the top-$k$ largest estimated value. In other words, we sample only the Neurons whose lower and upper bounds straddle the current top-$k$ position. When no Neurons meet this condition, the algorithm concludes that the top-$k$ Neurons are confidently distinguished from the rest within the specified error tolerance. Our goal aligns with multi-armed bandit (MAB) settings~\citep{jamiesonMAB, LishaMAB}. An MAB is a decision-making problem where you have several arms (choices), and each choice gives a reward, but you don’t know which one is best. The MAB framework supports this goal through two complementary steps.

\textbf{(a) Exploitation:} The algorithm allocates more samples to Neurons that currently show high estimated Shapley Values, allowing it to refine estimates for the most promising candidates.

\textbf{(b) Exploration:} It also samples Neurons with limited data or high uncertainty, reducing the risk of overlooking a genuinely important Neuron due to a few early low-value observations. Algorithm~\ref{alg:snv_full} summarizes the steps of the SNV method.
\begin{algorithm*}[t]
\caption{Continual Learning with Shapley Neuron Valuation (SNV)}
\label{alg:snv_full}
\begin{minipage}[t]{0.52\textwidth}
\begin{algorithmic}
\STATE \textbf{procedure} \textsc{Train}($f_\theta$, $\{\mathcal{D}_t\}_{t=1}^T$, $c$, $\tau$, $\alpha$)
\STATE \quad Initialize $\theta$ via He initialization
\STATE \quad $\mathbf{B}_0 \gets \mathbf{0}^N$
\STATE \quad $R \gets 0 \in \mathbb{R}^{T \times T}$
\STATE \quad \textbf{for} $t = 1, \ldots, T$ \textbf{do}
\STATE \quad\quad Construct freezing mask:
\STATE \quad\quad\quad $(\mathbf{M}_{t-1})_j \gets 
    \begin{cases} 
    0 & \text{if } \theta_j \in m_i,\; (\mathbf{B}_{t-1})_i = 1 \\
    1 & \text{otherwise}
    \end{cases}$
\STATE \quad\quad \textbf{for} each epoch \textbf{do}
\STATE \quad\quad\quad \textbf{for} $(x, y) \in \mathcal{D}_t^{\text{train}}$ \textbf{do}
\STATE \quad\quad\quad\quad $\mathcal{L} \gets \text{Loss}(f_\theta(x, t),\; y)$
\STATE \quad\quad\quad\quad $\theta \gets \theta - \eta \left( \nabla_\theta \mathcal{L} \odot \mathbf{M}_{t-1} \right)$
\STATE \quad\quad\quad \textbf{end for}
\STATE \quad\quad \textbf{end for}
\STATE \quad\quad $S_t \gets \textsc{EstimateSNV}(f_\theta, \mathcal{D}_t^{\text{val}}, c, \tau, \alpha)$
\STATE \quad\quad $\mathbf{B}_t \gets \mathbf{B}_{t-1} \cup S_t$
\STATE \quad\quad Store task head $h_t$
\STATE \quad\quad $R_{t,:} \gets \textsc{Evaluate}(f_\theta, \{\mathcal{D}_k^{\text{test}}\}_{k=1}^t)$
\STATE \quad \textbf{end for}
\STATE \quad \textbf{return} $f_\theta$, $R$, $\{S_t\}_{t=1}^T$, $\{h_t\}_{t=1}^T$
\STATE \textbf{end procedure}\\
\hrulefill
\STATE \textbf{procedure} \textsc{EstimateSNV}($f_\theta$, $\mathcal{D}_{\text{val}}$, $c$, $\tau$, $\alpha$)
\STATE \quad \textbf{for} $i = 1, \ldots, N$ \textbf{do}
\STATE \quad\quad $\mu_i \gets \frac{1}{|\mathcal{D}_{\text{val}}|} \sum_{x \in \mathcal{D}_{\text{val}}} a_i(x)$
\STATE \quad \textbf{end for}
\STATE \quad $\hat{\phi}_i \gets 0,\; n_i \gets 0,\; \sigma_i^2 \gets 0 \;\;\forall\, i$
\STATE \quad $k \gets \lfloor c \cdot N \rfloor$, $\mathcal{A} \gets \mathcal{M}$
\end{algorithmic}
\end{minipage}%
\hfill
\begin{minipage}[t]{0.46\textwidth}
\begin{algorithmic}
\STATE \quad \textbf{while} $\mathcal{A} \neq \emptyset$ \textbf{do}
\STATE \quad\quad Sample $\pi \sim \text{Uniform}(\Pi)$
\STATE \quad\quad $\mathcal{S} \gets \emptyset$
\STATE \quad\quad \textbf{for} $j = 1, \ldots, N$ \textbf{do}
\STATE \quad\quad\quad $i \gets \pi(j)$
\STATE \quad\quad\quad \textbf{if} $i \in \mathcal{A}$ \textbf{and} $V(\mathcal{S}) > \tau$ \textbf{then}
\STATE \quad\quad\quad\quad $\Delta_i \gets V(\mathcal{S} \cup \{i\}) - V(\mathcal{S})$
\STATE \quad\quad\quad\quad Update $\hat{\phi}_i, \sigma_i^2, n_i$ with $\Delta_i$
\STATE \quad\quad\quad \textbf{end if}
\STATE \quad\quad\quad $\mathcal{S} \gets \mathcal{S} \cup \{i\}$
\STATE \quad\quad \textbf{end for}
\STATE \quad\quad $\delta_i \gets z_\alpha \cdot \sigma_i / \sqrt{n_i}\;\;\forall\, i:\, n_i > 0$
\STATE \quad\quad $\phi^{(k)} \gets k$-th largest in $\{\hat{\phi}_i\}$
\STATE \quad\quad $\mathcal{A} \gets \{i : |\hat{\phi}_i - \phi^{(k)}| < \delta_i\}$
\STATE \quad \textbf{end while}
\STATE \quad $S_i \gets \mathbf{1}[i \in \arg\text{top-}k(\{\hat{\phi}_i\})]$ $\forall\, i$
\STATE \quad \textbf{return} $S_i$
\STATE \textbf{end procedure}\\
\hrulefill
\STATE \textbf{procedure} \textsc{Evaluate}($f_\theta$, $\{\mathcal{D}_k^{\text{test}}\}_{k=1}^t$)
\STATE \quad $r \gets 0 \in \mathbb{R}^{t}$
\STATE \quad \textbf{for} $k = 1, \ldots, t$ \textbf{do}
\STATE \quad\quad $r_k \gets 0$
\STATE \quad\quad \textbf{for} $(x, y) \in \mathcal{D}_k^{\text{test}}$ \textbf{do}
\STATE \quad\quad\quad $r_k \gets r_k + \text{accuracy}(f_\theta(x, k),\; y)$
\STATE \quad\quad \textbf{end for}
\STATE \quad\quad $r_k \gets r_k \,/\, |\mathcal{D}_k^{\text{test}}|$
\STATE \quad \textbf{end for}
\STATE \quad \textbf{return} $r$
\STATE \textbf{end procedure}
\end{algorithmic}
\end{minipage}
\end{algorithm*}

\textbf{Notation for Algorithm.} Let $a_i(x)$ denote the activation of Neuron $i$ for input $x$, and let $\mu_i$ be its mean activation over the current task's data. The truncation threshold $\tau$ defines a minimum performance floor: when $V(\mathcal{S}) \leq \tau$, marginal contributions are skipped for efficiency. Finally, $z_\alpha$ is the standard-normal critical value at confidence level $\alpha$.
\section{Experiments}
\begin{table*}[t]
\centering
\caption{Performance comparison on ImageNet-1k across 10, 20, and 50 tasks (ResNet-18) for both CIL and TIL scenarios. Best results are highlighted for \textcolor{green!80}{buffer-free} and \textcolor{cyan!70}{memory-based}.}
\label{tab:imagenet_combined}
\vspace{0.5em}
\setlength{\tabcolsep}{2pt}
\renewcommand{\arraystretch}{1.05}
\resizebox{\textwidth}{!}{\begin{tabular}{@{}l ccc ccc ccc ccc ccc ccc@{}}
\toprule
& \multicolumn{9}{c}{\textbf{CIL}} & \multicolumn{9}{c}{\textbf{TIL}} \\
\cmidrule(lr){2-10} \cmidrule(lr){11-19}
& \multicolumn{3}{c}{\textbf{10 Tasks}} & \multicolumn{3}{c}{\textbf{20 Tasks}} & \multicolumn{3}{c}{\textbf{50 Tasks}}
& \multicolumn{3}{c}{\textbf{10 Tasks}} & \multicolumn{3}{c}{\textbf{20 Tasks}} & \multicolumn{3}{c}{\textbf{50 Tasks}} \\
\cmidrule(lr){2-4} \cmidrule(lr){5-7} \cmidrule(lr){8-10}
\cmidrule(lr){11-13} \cmidrule(lr){14-16} \cmidrule(lr){17-19}
\textbf{Method}
  & ACC$\uparrow$ & BWT$\uparrow$ & PS$\uparrow$
  & ACC$\uparrow$ & BWT$\uparrow$ & PS$\uparrow$
  & ACC$\uparrow$ & BWT$\uparrow$ & PS$\uparrow$
  & ACC$\uparrow$ & BWT$\uparrow$ & PS$\uparrow$
  & ACC$\uparrow$ & BWT$\uparrow$ & PS$\uparrow$
  & ACC$\uparrow$ & BWT$\uparrow$ & PS$\uparrow$ \\
\midrule
\multicolumn{19}{c}{\textit{\textbf{Memory-free methods}}} \\
SNV (Ours)
  & \cellcolor{green!20}\textbf{41.30} \scriptsize{$\pm$ 1.25} & \cellcolor{green!20}$-$0.05 & 0.65
  & \cellcolor{green!20}\textbf{34.20} \scriptsize{$\pm$ 1.48} & \cellcolor{green!20}$-$0.05 & 0.58
  & \cellcolor{green!20}\textbf{25.60} \scriptsize{$\pm$ 1.62} & \cellcolor{green!20}$-$0.06 & 0.49
  & \cellcolor{green!20}\textbf{57.82} \scriptsize{$\pm$ 1.38} & \cellcolor{green!20}0.0 & 0.58
  & \cellcolor{green!20}\textbf{50.45} \scriptsize{$\pm$ 1.55} & \cellcolor{green!20}0.0 & 0.52
  & \cellcolor{green!20}\textbf{40.18} \scriptsize{$\pm$ 1.72} & \cellcolor{green!20}0.0 & 0.44 \\
NFL+
  & 38.42 \scriptsize{$\pm$ 3.85} & $-$37.78 & \cellcolor{green!20}\textbf{0.67}
  & 31.50 \scriptsize{$\pm$ 3.62} & $-$41.20 & \cellcolor{green!20}\textbf{0.60}
  & 22.40 \scriptsize{$\pm$ 3.48} & $-$45.80 & \cellcolor{green!20}\textbf{0.51}
  & 51.36 \scriptsize{$\pm$ 4.52} & $-$5.19 & \cellcolor{green!20}\textbf{0.63}
  & 45.80 \scriptsize{$\pm$ 4.38} & $-$7.85 & \cellcolor{green!20}\textbf{0.56}
  & 37.20 \scriptsize{$\pm$ 4.15} & $-$12.40 & \cellcolor{green!20}\textbf{0.48} \\
DCNet
  & 37.80 \scriptsize{$\pm$ 3.72} & $-$38.96 & 0.66
  & 30.10 \scriptsize{$\pm$ 3.40} & $-$42.50 & 0.58
  & 20.80 \scriptsize{$\pm$ 3.25} & $-$47.30 & 0.48
  & 50.15 \scriptsize{$\pm$ 4.30} & $-$6.42 & 0.60
  & 44.30 \scriptsize{$\pm$ 4.12} & $-$9.50 & 0.53
  & 35.80 \scriptsize{$\pm$ 3.90} & $-$14.80 & 0.44 \\
WSN
  & --- & --- & ---
  & --- & --- & ---
  & --- & --- & ---
  & 48.73 \scriptsize{$\pm$ 1.85} & \cellcolor{green!20}0.0 & 0.52
  & 43.20 \scriptsize{$\pm$ 1.72} & \cellcolor{green!20}0.0 & 0.46
  & 35.40 \scriptsize{$\pm$ 1.58} & \cellcolor{green!20}0.0 & 0.38 \\
NISPA
  & 29.40 \scriptsize{$\pm$ 4.15} & $-$43.08 & 0.61
  & 22.30 \scriptsize{$\pm$ 3.88} & $-$46.90 & 0.52
  & 14.50 \scriptsize{$\pm$ 3.60} & $-$51.40 & 0.41
  & 46.80 \scriptsize{$\pm$ 3.95} & $-$8.75 & 0.55
  & 40.60 \scriptsize{$\pm$ 3.72} & $-$12.30 & 0.47
  & 32.50 \scriptsize{$\pm$ 3.48} & $-$18.60 & 0.38 \\
NFL
  & 27.15 \scriptsize{$\pm$ 4.92} & $-$44.80 & 0.59
  & 21.40 \scriptsize{$\pm$ 4.55} & $-$48.50 & 0.50
  & 14.80 \scriptsize{$\pm$ 4.12} & $-$53.10 & 0.39
  & 40.18 \scriptsize{$\pm$ 5.32} & $-$23.62 & 0.49
  & 34.50 \scriptsize{$\pm$ 5.10} & $-$30.15 & 0.42
  & 26.80 \scriptsize{$\pm$ 4.78} & $-$38.70 & 0.34 \\
PEC
  & 14.83 \scriptsize{$\pm$ 4.25} & $-$50.20 & 0.53
  & 11.20 \scriptsize{$\pm$ 3.95} & $-$54.60 & 0.45
  & 7.40 \scriptsize{$\pm$ 3.50} & $-$59.20 & 0.35
  & --- & --- & ---
  & --- & --- & ---
  & --- & --- & --- \\
SpaceNet
  & 13.50 \scriptsize{$\pm$ 4.08} & $-$44.28 & 0.53
  & 10.40 \scriptsize{$\pm$ 3.76} & $-$48.50 & 0.44
  & 8.30 \scriptsize{$\pm$ 3.42} & $-$53.80 & 0.36
  & 37.20 \scriptsize{$\pm$ 4.15} & $-$18.40 & 0.40
  & 30.80 \scriptsize{$\pm$ 3.90} & $-$24.60 & 0.33
  & 23.10 \scriptsize{$\pm$ 3.65} & $-$32.40 & 0.25 \\
LwF
  & 11.24 \scriptsize{$\pm$ 4.38} & $-$47.50 & 0.51
  & 8.45 \scriptsize{$\pm$ 3.90} & $-$52.10 & 0.41
  & 5.30 \scriptsize{$\pm$ 3.15} & $-$57.40 & 0.30
  & 42.56 \scriptsize{$\pm$ 5.12} & $-$58.59 & 0.38
  & 35.20 \scriptsize{$\pm$ 4.85} & $-$65.30 & 0.31
  & 25.80 \scriptsize{$\pm$ 4.50} & $-$74.50 & 0.22 \\
SI
  & 9.68 \scriptsize{$\pm$ 5.45} & $-$45.51 & 0.50
  & 7.15 \scriptsize{$\pm$ 4.80} & $-$49.80 & 0.40
  & 4.50 \scriptsize{$\pm$ 3.85} & $-$55.20 & 0.28
  & 39.82 \scriptsize{$\pm$ 4.20} & $-$63.76 & 0.41
  & 33.40 \scriptsize{$\pm$ 4.05} & $-$70.20 & 0.34
  & 24.60 \scriptsize{$\pm$ 3.80} & $-$78.90 & 0.25 \\
EWC
  & 7.53 \scriptsize{$\pm$ 4.68} & $-$41.67 & 0.48
  & 5.60 \scriptsize{$\pm$ 4.20} & $-$46.20 & 0.38
  & 3.20 \scriptsize{$\pm$ 3.55} & $-$52.10 & 0.25
  & 36.47 \scriptsize{$\pm$ 4.02} & $-$54.13 & 0.37
  & 30.20 \scriptsize{$\pm$ 3.88} & $-$61.50 & 0.30
  & 22.10 \scriptsize{$\pm$ 3.62} & $-$70.80 & 0.22 \\
\midrule
\multicolumn{19}{c}{\textit{\textbf{Memory-based methods}}} \\
DyTox
  & \cellcolor{cyan!20}40.15 \scriptsize{$\pm$ 4.30} & $-$35.20 & 0.58
  & \cellcolor{cyan!20}33.20 \scriptsize{$\pm$ 4.12} & $-$38.80 & 0.50
  & \cellcolor{cyan!20}24.50 \scriptsize{$\pm$ 3.90} & $-$43.50 & 0.42
  & \cellcolor{cyan!20}59.40 \scriptsize{$\pm$ 4.65} & $-$6.15 & 0.58
  & \cellcolor{cyan!20}52.10 \scriptsize{$\pm$ 4.48} & $-$8.90 & 0.50
  & \cellcolor{cyan!20}42.80 \scriptsize{$\pm$ 4.20} & $-$13.70 & 0.42 \\
DER++
  & 14.20 \scriptsize{$\pm$ 5.15} & \cellcolor{cyan!20}$-$32.80 & 0.36
  & 10.35 \scriptsize{$\pm$ 4.80} & \cellcolor{cyan!20}$-$36.50 & 0.30
  & 6.10 \scriptsize{$\pm$ 4.25} & \cellcolor{cyan!20}$-$41.80 & 0.23
  & 55.80 \scriptsize{$\pm$ 4.72} & $-$6.30 & 0.36
  & 48.20 \scriptsize{$\pm$ 4.50} & $-$9.15 & 0.30
  & 39.40 \scriptsize{$\pm$ 4.18} & $-$14.50 & 0.23 \\
iCaRL
  & 12.45 \scriptsize{$\pm$ 5.70} & $-$56.40 & 0.37
  & 8.80 \scriptsize{$\pm$ 5.25} & $-$60.20 & 0.31
  & 5.30 \scriptsize{$\pm$ 4.65} & $-$65.50 & 0.24
  & 55.10 \scriptsize{$\pm$ 4.35} & $-$9.80 & 0.37
  & 47.50 \scriptsize{$\pm$ 4.15} & $-$13.60 & 0.31
  & 38.20 \scriptsize{$\pm$ 3.90} & $-$19.40 & 0.24 \\
\bottomrule
\end{tabular}}
\end{table*}
\begin{figure*}[t]
    \centering
    \includegraphics[width=\linewidth]{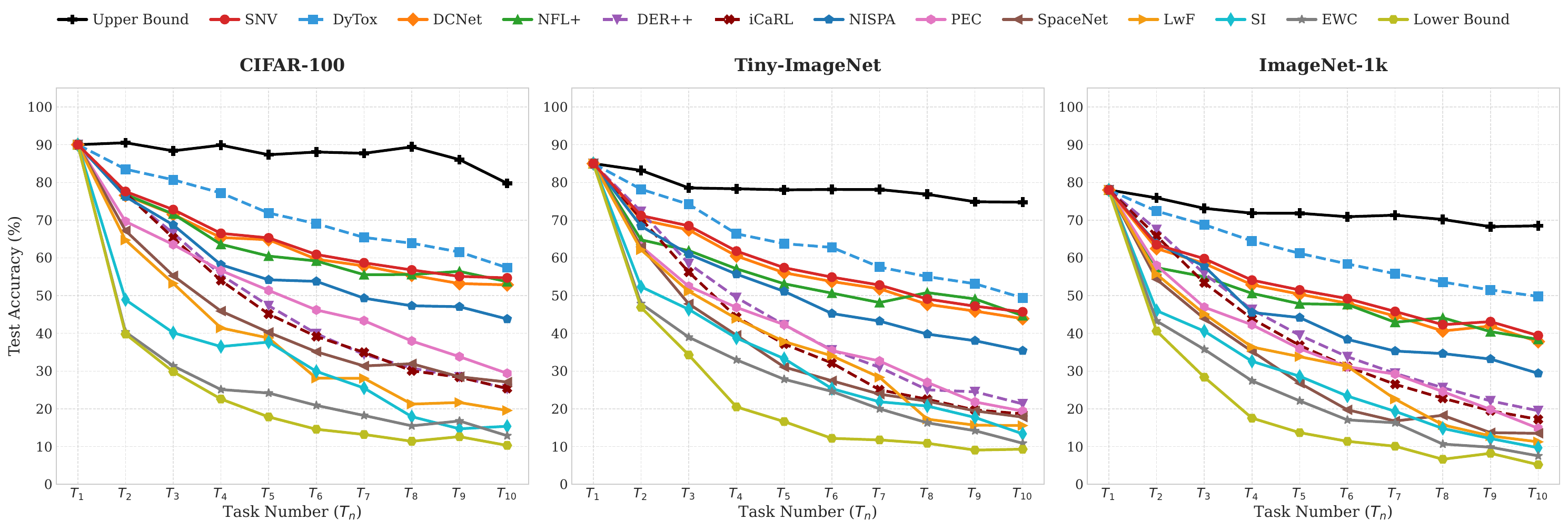}
    \caption{ACC evaluation for CIL across 10 tasks on each dataset. Each point represents the average classification accuracy evaluated after learning a given task. For example, the value at task 5 corresponds to the average accuracy of the model on the test sets of tasks 1 through 5 after completing training on task 5. For details, see Fig.~\ref{fig:10task_snv_matrix}. Solid lines: buffer-free; dashed lines: memory-based.}
    \label{fig: CIL 10 task}
\end{figure*}

\textbf{Scenarios:} We conducted a series of experiments to evaluate the performance in Class Incremental Learning (CIL) and Task Incremental Learning (TIL) scenarios (See section~\ref{Evaluation Protocol} in the Appendix).

\textbf{Evaluation Metrics:} To assess the ability of each method to perform effective CL and battle catastrophic forgetting, we use Average Accuracy (ACC), Backward Transfer (BWT), and Plasticity-Stability (PS) (See section~\ref{Evaluation Metrices} in the Appendix).

\textbf{Datasets:} We conduct experiments on CIFAR-100~\citep{CIFAR100}, Tiny-ImageNet~\citep{TinyImageNet}, and ImageNet-1k~\citep{deng2009imagenet}.

\textbf{Methods:} Our primary focus is to compare our approach with buffer-free methods, but we also incorporate memory-based and architecture-based methods. We used Stochastic Gradient Descent (SGD) as a lower bound and joint training as an upper bound to establish performance bounds. Additionally, we compare SNV against representative buffer-free continual learning methods, such as EWC~\citep{kirkpatrick2017overcoming}, SI~\citep{zenke2017continual}, LwF~\citep{li2017learningforgetting}, Prediction Error-based Classification (PEC)~\citep{zajc2024predictionerrorbasedclassificationclassincremental}, Winning SubNetwork (WSN)~\citep{haeyong}, SpaceNet~\citep{sokar2021spacenet}, Neuro-Inspired Stability-Plasticity Adaptation (NISPA)~\citep{gurbuz2022nispa}, Discriminative and Consistent Network (DCNet)~\citep{wang2025dcnet}, and No Forgetting Learning (NFL)~\citep{vahedifar2026forgettinglearningbufferfreecontinual}. \textit{Memory-based methods:} iCaRL~\citep{rebuffi2017icarl}, DER++~\citep{buzzega2020dark}, and DyTox~\citep{Douillard_DyTox}.

\textbf{Experiment Setup:} We adopt the ResNet-18 architecture~\citep{He_2016_CVPR} with He initialization~\citep{He_2015_ICCV} using the Adam optimizer~\citep{adamkingma2017adammethodstochasticoptimization} for training. All training is averaged over 10 runs, with training using 70\% of the data and testing using 20\%, and 10\% of the data from each task is reserved as a validation set. For experiments on CIFAR-100 and Tiny-ImageNet, we train for a maximum of 200 epochs with early stopping based on validation loss and a batch size of 64. For ImageNet-1k, we train for a maximum of 100 epochs per task with early stopping and a batch size of 128. We adopt a tuning protocol aligned with the principles of the Generalizable Two-phase Evaluation Protocol (GTEP)~\citep{cha2025hyperparameters}. Hyperparameters are selected using only the first task's validation split via grid search over the ranges specified in Table~\ref{tab:hyperparameters_complete}. Once selected, these hyperparameters are \emph{fixed for all subsequent tasks} and are never re-tuned on later data. For memory-based methods in the CIL setting, we follow prior work~\citep{zhou2024class} and use 2,000 exemplars for CIFAR-100 and Tiny-ImageNet, and 20,000 exemplars for ImageNet-1k. In the TIL setting, we follow~\citep{buzzega2020dark} and allocate 200 exemplars. All experiments were conducted on a single NVIDIA A6000 GPU.
\subsection{Comparison based on the CIL Scenario}
\begin{figure*}[t]
    \centering
    \includegraphics[width=\linewidth]{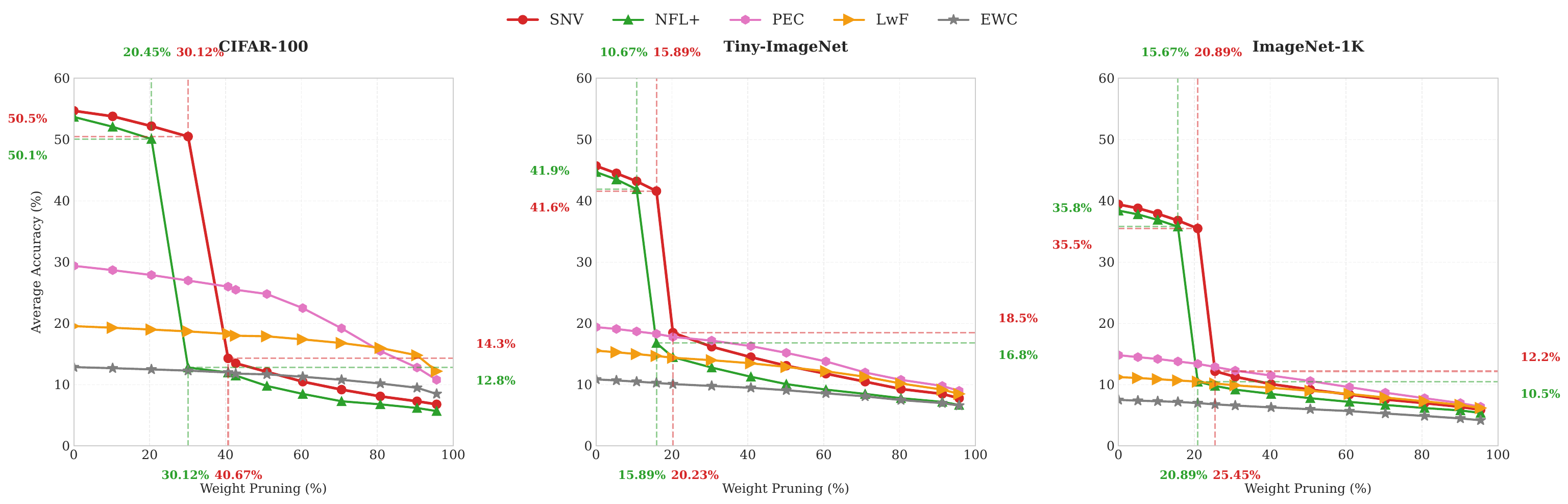}
    \caption{Network parameter usage efficiency across datasets in the CIL scenario for 10 tasks. The dashed vertical lines indicate the critical pruning percentage where each method experiences significant accuracy degradation. A sharper decline at lower pruning percentages indicates more efficient usage of network capacity, as it suggests the method utilizes essential parameters with minimal redundancy.}
    \label{fig: Weight pruning 10 tasks}
\end{figure*}
Table~\ref{tab:imagenet_combined} presents results on ImageNet-1k, the largest dataset in our evaluation. In the CIL scenario, SNV achieves 41.30\% on 10 tasks, 34.20\% on 20 tasks, and 25.60\% on 50 tasks, consistently outperforming all buffer-free baselines. Notably, SNV even surpasses DyTox (40.15\%, 33.20\%, 24.50\%), a memory-based method that stores 20000 exemplars, demonstrating that principled Neuron selection can compensate for the absence of a replay buffer. NFL+, the second-best buffer-free method, reaches 38.42\% on 10 tasks by progressively freezing parameters, but lacks a mechanism to determine \emph{which} parameters matter most, a distinction that becomes increasingly consequential as the task sequence grows. This gap widens from 2.88\% points at 10 tasks to 3.20\% points at 50 tasks.

In the TIL scenario, SNV maintains zero backward transfer across all splits while achieving the highest accuracy among buffer-free methods (57.82\%, 50.45\%, 40.18\%). Regularization-based methods degrade sharply at this scale: EWC drops to 7.53\% (CIL, 10 tasks) and SI to 4.50\% (CIL, 50 tasks), as their diagonal approximations fail to capture the complex parameter interactions present in a 1000-class problem.  Performance comparison on CIFAR-100 and Tiny-ImageNet across 10 and 20 tasks for the CIL scenario is summarized in Table~\ref{tab: Cifar100tinycombined} in the Appendix.

In Fig.~\ref{fig: CIL 10 task}, the evolution of the precision per-task further confirms that SNV maintains a consistent advantage over buffer-free baselines across datasets. Another notable observation is that large-scale datasets, such as ImageNet-1k, tend to favor memory-based methods, whereas smaller datasets, such as CIFAR-100, do not. This is consistent with theory: memory buffers help most when a small number of stored samples can capture essential information from large, diverse datasets. However, as argued in \citep{vahedifar2026forgettinglearningbufferfreecontinual, zhou2024class}, many memory-based methods align more closely with sequential learning using limited replay than with strict continual learning. Comparing buffer-free methods to memory-based ones is not always fair, and using a fixed buffer size across methods can also produce misleading conclusions.

A principled mechanism for choosing both the number and the composition of stored samples (samples that contribute most to the loss or degrade performance the most) is necessary for fair evaluation. In scenarios where computational storage is not a constraint (which is uncommon in real-world systems), memory-based methods may be a reasonable approach. However, in realistic settings where storing data is costly or mandated by regulations like GDPR~\citep{voigt2017gdpr}, buffer-free methods become more valuable and inevitable. We also argue that the community should distinguish between continual learning and sequential learning with small replay buffers. Deviating from the core definition of continual learning creates a fundamentally different problem setting. These findings suggest that simply storing and replaying samples during training may not be the optimal strategy. 

Following \citep{vahedifar2026forgettinglearningbufferfreecontinual}, we analyze weight pruning in the CIL scenario after learning 10 tasks to assess parameter utilization efficiency. Counter-intuitively, sharp accuracy drops under pruning indicate efficient parameter usage, most weights contribute meaningfully, while robustness to pruning reveals redundancy. The results in Fig.~\ref{fig: Weight pruning 10 tasks} show that both SNV and NFL+ exhibit sharp accuracy cliffs, but SNV consistently sustains its performance under deeper pruning before collapsing. On CIFAR-100, NFL+ drops from 50.1\% to 12.8\% between 20\% and 30\% pruning, whereas SNV maintains 50.5\% accuracy up to 30\% pruning before its cliff at 40\%. This pattern holds across datasets: on Tiny-ImageNet, NFL+ collapses at $\sim15\%$ pruning while SNV holds until $\sim20\%$; on ImageNet-1k, the gap is $\sim 20\%$ versus $\sim25\%$. The earlier cliff point indicates that NFL+ packs task-relevant information more densely, fewer weights are redundant, so each additional pruned weight is more likely to be critical. In contrast, PEC degrades gradually from $29.4\%$ to $10.8\%$ across the full pruning range on CIFAR-100, tolerating up to 80\% pruning with only moderate losses. This robustness exposes PEC's fundamental inefficiency: training $|\mathcal{N}|$ separate student networks without exploiting class similarity creates massive parameter redundancy. EWC and LwF show similarly gradual degradation ($12.87\% \to 8.5\%$ and $19.56\% \to 12.2\%$ on CIFAR-100, respectively), confirming that these methods fail to exploit available capacity fully.

\subsection{Comparison based on the TIL Scenario}
\begin{table*}[t]
\centering
\caption{ACC, BWT, and PS performance comparison for TIL scenario on CIFAR-100 and Tiny-ImageNet. Best results are highlighted for \textcolor{green!80}{buffer-free} and \textcolor{cyan!70}{memory-based}. Memory-based methods use a fixed buffer of 200 exemplars.}
\label{tab:TIL_cifar_tiny}
\setlength{\tabcolsep}{1pt}
\renewcommand{\arraystretch}{1}
\resizebox{.95\textwidth}{!}{\begin{tabular}{@{}l ccc ccc ccc ccc@{}}
\toprule
& \multicolumn{6}{c}{\textbf{CIFAR-100}} & \multicolumn{6}{c}{\textbf{Tiny-ImageNet}} \\
\cmidrule(lr){2-7} \cmidrule(lr){8-13}
& \multicolumn{3}{c}{\textbf{10 tasks}} & \multicolumn{3}{c}{\textbf{20 tasks}} & \multicolumn{3}{c}{\textbf{10 tasks}} & \multicolumn{3}{c}{\textbf{20 tasks}} \\
\cmidrule(lr){2-4} \cmidrule(lr){5-7} \cmidrule(lr){8-10} \cmidrule(lr){11-13}
\textbf{Method} & ACC$\uparrow$ & BWT$\uparrow$ & PS$\uparrow$ & ACC$\uparrow$ & BWT$\uparrow$ & PS$\uparrow$ & ACC$\uparrow$ & BWT$\uparrow$ & PS$\uparrow$ & ACC$\uparrow$ & BWT$\uparrow$ & PS$\uparrow$ \\
\midrule
\multicolumn{13}{c}{\textit{\textbf{Memory-free methods}}} \\
\textbf{SNV,} $c{=}0.03$
  & 71.74 \scriptsize{$\pm$ 0.33} & \cellcolor{green!20}0.0 & 0.52
  & 66.21 \scriptsize{$\pm$ 0.48} & \cellcolor{green!20}0.0 & 0.46
  & 69.89 \scriptsize{$\pm$ 1.62} & \cellcolor{green!20}0.0 & 0.54
  & 63.78 \scriptsize{$\pm$ 1.85} & \cellcolor{green!20}0.0 & 0.48 \\
\textbf{SNV,} $c{=}0.05$
  & 74.52 \scriptsize{$\pm$ 0.31} & \cellcolor{green!20}0.0 & 0.54
  & 68.73 \scriptsize{$\pm$ 0.44} & \cellcolor{green!20}0.0 & 0.48
  & 73.24 \scriptsize{$\pm$ 1.07} & \cellcolor{green!20}0.0 & 0.56
  & 66.71 \scriptsize{$\pm$ 1.32} & \cellcolor{green!20}0.0 & 0.50 \\
\textbf{SNV,} $c{=}0.1$
  & 76.19 \scriptsize{$\pm$ 0.40} & \cellcolor{green!20}0.0 & \cellcolor{green!20}\textbf{0.62}
  & 69.82 \scriptsize{$\pm$ 0.52} & \cellcolor{green!20}0.0 & 0.55
  & 74.73 \scriptsize{$\pm$ 1.38} & \cellcolor{green!20}0.0 & \cellcolor{green!20}\textbf{0.64}
  & 67.95 \scriptsize{$\pm$ 1.55} & \cellcolor{green!20}0.0 & 0.57 \\
\textbf{SNV,} $c{=}0.3$
  & 77.89 \scriptsize{$\pm$ 0.58} & \cellcolor{green!20}0.0 & 0.58
  & 70.65 \scriptsize{$\pm$ 0.71} & \cellcolor{green!20}0.0 & 0.52
  & 74.38 \scriptsize{$\pm$ 1.44} & \cellcolor{green!20}0.0 & 0.61
  & 66.94 \scriptsize{$\pm$ 1.68} & \cellcolor{green!20}0.0 & 0.55 \\
\textbf{SNV,} $c{=}0.5$
  & \cellcolor{green!20}\textbf{79.76} \scriptsize{$\pm$ 0.44} & \cellcolor{green!20}0.0 & 0.60
  & \cellcolor{green!20}\textbf{71.93} \scriptsize{$\pm$ 0.62} & \cellcolor{green!20}0.0 & 0.54
  & \cellcolor{green!20}\textbf{74.82} \scriptsize{$\pm$ 0.93} & \cellcolor{green!20}0.0 & 0.59
  & \cellcolor{green!20}\textbf{67.25} \scriptsize{$\pm$ 1.18} & \cellcolor{green!20}0.0 & 0.53 \\
\cmidrule{1-13}
WSN, $c{=}0.03$
  & 59.65 \scriptsize{$\pm$ 0.38} & \cellcolor{green!20}0.0 & 0.38
  & 54.38 \scriptsize{$\pm$ 0.51} & \cellcolor{green!20}0.0 & 0.33
  & 60.72 \scriptsize{$\pm$ 1.83} & \cellcolor{green!20}0.0 & 0.40
  & 54.90 \scriptsize{$\pm$ 2.04} & \cellcolor{green!20}0.0 & 0.35 \\
WSN, $c{=}0.05$
  & 60.19 \scriptsize{$\pm$ 0.29} & \cellcolor{green!20}0.0 & 0.39
  & 54.62 \scriptsize{$\pm$ 0.42} & \cellcolor{green!20}0.0 & 0.34
  & 63.22 \scriptsize{$\pm$ 1.14} & \cellcolor{green!20}0.0 & 0.42
  & 57.05 \scriptsize{$\pm$ 1.38} & \cellcolor{green!20}0.0 & 0.37 \\
WSN, $c{=}0.1$
  & 61.22 \scriptsize{$\pm$ 0.49} & \cellcolor{green!20}0.0 & 0.41
  & 55.15 \scriptsize{$\pm$ 0.62} & \cellcolor{green!20}0.0 & 0.36
  & 61.96 \scriptsize{$\pm$ 1.61} & \cellcolor{green!20}0.0 & 0.41
  & 55.58 \scriptsize{$\pm$ 1.82} & \cellcolor{green!20}0.0 & 0.36 \\
WSN, $c{=}0.3$
  & 63.15 \scriptsize{$\pm$ 0.70} & \cellcolor{green!20}0.0 & 0.42
  & 56.32 \scriptsize{$\pm$ 0.85} & \cellcolor{green!20}0.0 & 0.37
  & 62.92 \scriptsize{$\pm$ 1.57} & \cellcolor{green!20}0.0 & 0.42
  & 56.35 \scriptsize{$\pm$ 1.74} & \cellcolor{green!20}0.0 & 0.37 \\
WSN, $c{=}0.5$
  & 64.00 \scriptsize{$\pm$ 0.36} & \cellcolor{green!20}0.0 & 0.66
  & 55.82 \scriptsize{$\pm$ 0.52} & \cellcolor{green!20}0.0 & 0.58
  & 61.06 \scriptsize{$\pm$ 1.02} & \cellcolor{green!20}0.0 & 0.64
  & 52.41 \scriptsize{$\pm$ 1.38} & \cellcolor{green!20}0.0 & 0.55 \\
\cmidrule{1-13}
NFL+
  & 70.68 \scriptsize{$\pm$ 2.45} & $-$0.35 & 0.72
  & 62.14 \scriptsize{$\pm$ 3.12} & $-$0.52 & \cellcolor{green!20}\textbf{0.65}
  & 58.21 \scriptsize{$\pm$ 4.10} & $-$1.04 & 0.68
  & 49.85 \scriptsize{$\pm$ 3.87} & $-$1.35 & \cellcolor{green!20}\textbf{0.61} \\
DCNet
  & 66.80 \scriptsize{$\pm$ 2.70} & $-$1.68 & 0.66
  & 58.50 \scriptsize{$\pm$ 3.35} & $-$2.30 & 0.59
  & 54.80 \scriptsize{$\pm$ 4.32} & $-$2.90 & 0.62
  & 46.20 \scriptsize{$\pm$ 4.02} & $-$3.75 & 0.54 \\
NISPA
  & 62.35 \scriptsize{$\pm$ 3.10} & $-$3.48 & 0.62
  & 54.10 \scriptsize{$\pm$ 3.65} & $-$5.20 & 0.54
  & 53.80 \scriptsize{$\pm$ 4.42} & $-$5.65 & 0.59
  & 45.20 \scriptsize{$\pm$ 4.18} & $-$8.10 & 0.51 \\
SpaceNet
  & 54.15 \scriptsize{$\pm$ 3.72} & $-$12.40 & 0.48
  & 45.50 \scriptsize{$\pm$ 4.08} & $-$16.80 & 0.41
  & 48.90 \scriptsize{$\pm$ 4.50} & $-$15.35 & 0.46
  & 40.25 \scriptsize{$\pm$ 4.22} & $-$20.70 & 0.39 \\
LwF
  & 52.86 \scriptsize{$\pm$ 3.50} & $-$39.69 & 0.45
  & 44.21 \scriptsize{$\pm$ 4.15} & $-$43.52 & 0.39
  & 49.85 \scriptsize{$\pm$ 1.59} & $-$41.21 & 0.43
  & 41.37 \scriptsize{$\pm$ 2.84} & $-$46.18 & 0.37 \\
SI
  & 51.13 \scriptsize{$\pm$ 6.25} & $-$45.78 & 0.47
  & 42.68 \scriptsize{$\pm$ 5.73} & $-$49.31 & 0.41
  & 48.45 \scriptsize{$\pm$ 4.87} & $-$52.97 & 0.49
  & 40.12 \scriptsize{$\pm$ 5.21} & $-$57.64 & 0.43 \\
EWC
  & 50.59 \scriptsize{$\pm$ 1.39} & $-$31.64 & 0.44
  & 41.85 \scriptsize{$\pm$ 2.46} & $-$36.21 & 0.38
  & 44.19 \scriptsize{$\pm$ 3.45} & $-$41.85 & 0.46
  & 35.74 \scriptsize{$\pm$ 4.12} & $-$47.32 & 0.40 \\
\midrule
\multicolumn{13}{c}{\textit{\textbf{Memory-based methods}}} \\
DyTox
  & \cellcolor{cyan!20}73.80 \scriptsize{$\pm$ 2.52} & $-$3.50 & 0.69
  & \cellcolor{cyan!20}65.60 \scriptsize{$\pm$ 3.22} & $-$4.45 & 0.61
  & \cellcolor{cyan!20}68.50 \scriptsize{$\pm$ 3.28} & $-$3.55 & 0.66
  & \cellcolor{cyan!20}59.30 \scriptsize{$\pm$ 3.62} & $-$4.75 & 0.57 \\
DER++
  & 70.60 \scriptsize{$\pm$ 2.72} & $-$6.50 & 0.40
  & 62.00 \scriptsize{$\pm$ 3.50} & $-$7.50 & 0.34
  & 67.80 \scriptsize{$\pm$ 3.72} & $-$3.55 & 0.43
  & 59.00 \scriptsize{$\pm$ 4.15} & $-$4.85 & 0.37 \\
iCaRL
  & 70.10 \scriptsize{$\pm$ 4.08} & $-$5.35 & 0.43
  & 61.40 \scriptsize{$\pm$ 4.42} & $-$6.55 & 0.37
  & 66.70 \scriptsize{$\pm$ 3.02} & $-$6.45 & 0.41
  & 58.10 \scriptsize{$\pm$ 4.00} & $-$7.85 & 0.35 \\
\bottomrule
\end{tabular}
}
\end{table*}
Table~\ref{tab:imagenet_combined} provides results on ImageNet-1k for 10, 20, and 50 tasks for the TIL scenario. In addition, Table~\ref{tab:TIL_cifar_tiny} presents results in the TIL setting across 10 tasks for CIFAR-100 and Tiny-ImageNet. TIL simplifies the prediction problem by restricting classification to classes within the identified task rather than requiring discrimination among all classes simultaneously. On CIFAR-100 (10 tasks), SNV at $c{=}0.5$ achieves $79.76\%$ with $\text{BWT}{=}0.0$, outperforming every memory-based method including DyTox ($73.80\%$, $\text{BWT}{=}{-}3.50$), DER++ ($70.60\%$, $\text{BWT}{=}{-}6.50$), and iCaRL ($70.10\%$, $\text{BWT}{=}{-}5.35$), all of which store 200 exemplars. On Tiny-ImageNet, SNV reaches $74.82\%$ versus DyTox's $68.50\%$, a gap of over 6 points, again with zero forgetting. On ImageNet-1k, DyTox narrowly leads ($59.40\%$ vs.\ $57.82\%$), but at the cost of negative BWT ($-6.15$) and a replay buffer, while SNV maintains perfect stability. These results show that a buffer-free method can match or exceed memory-based approaches when parameter selection is done well.
\begin{figure}[t]
    \centering
    \includegraphics[width=\linewidth]{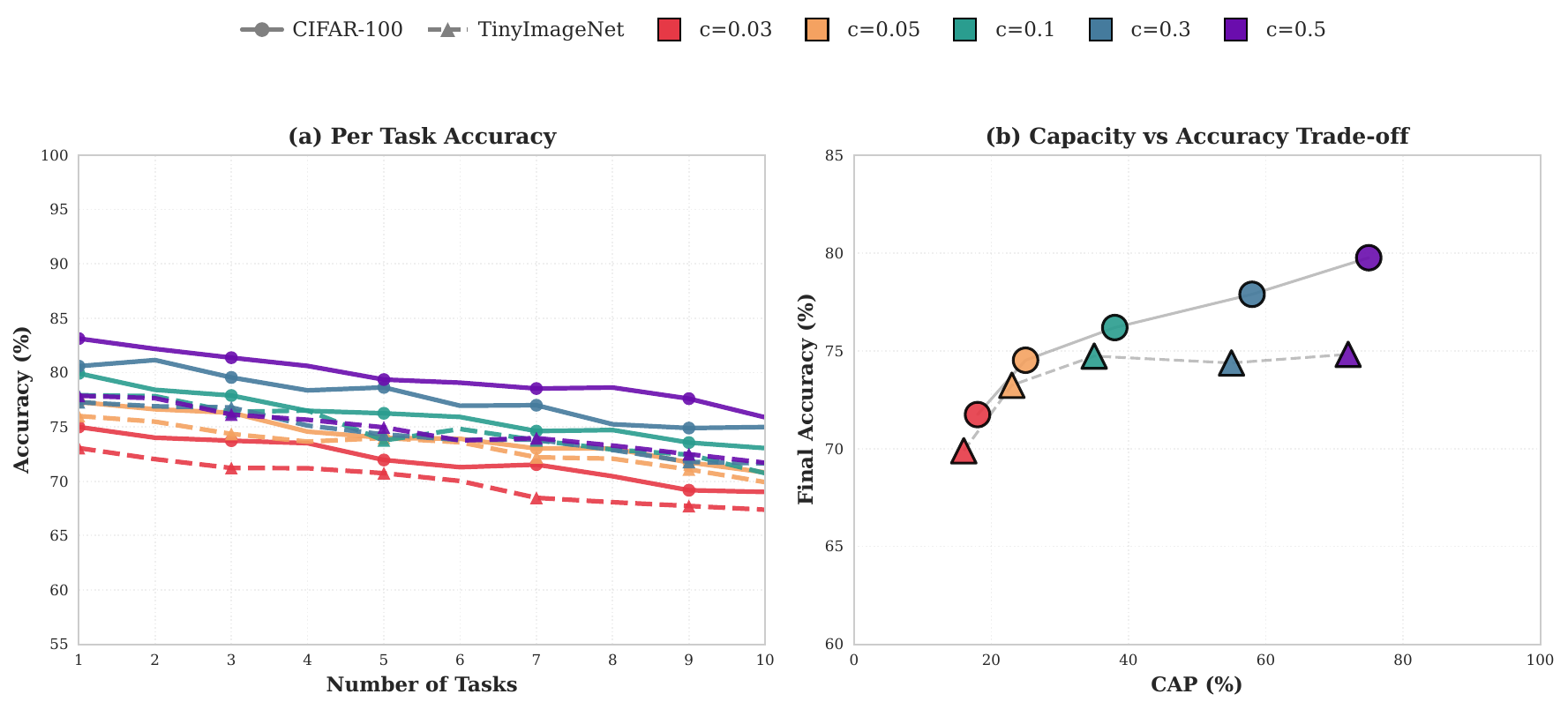}
    \caption{Capacity analysis on CIFAR-100 and TinyImageNet for SNV. (a) Accuracy as a function of model capacity across incremental tasks. (b) Capacity versus Accuracy.}
    \label{fig: CAP}
\end{figure}

Additionally, the comparison with WSN isolates the contribution of Shapley-based Neuron selection. Both SNV and WSN achieve $\text{BWT}{=}0.0$ across all datasets, confirming that mask-based freezing eliminates catastrophic forgetting. However, WSN reaches only $64.00\%$ on CIFAR-100 and $61.06\%$ on Tiny-ImageNet at the same capacity ($c{=}0.5$), trailing SNV by $15.76$ and $13.76$ points respectively. This gap widens under tighter budgets: at $c{=}0.03$, SNV achieves $71.74\%$ versus WSN's $59.65\%$ on CIFAR-100, a $12.09$-point advantage.

Finding a subnetwork does not guarantee finding the \textit{best} subnetwork; WSN's binary selection lacks a principled mechanism to rank which Neurons matter most. This motivates our approach: we leverage the strengths of both WSN and NFL+, but instead of using all parameters or freezing them blindly, we identify and select the important ones for each task.
\begin{table*}[t]
\caption{Computational cost across all datasets (CIL, 10 tasks). Training time is wall-clock for the
complete CL sequence. FLOPs denote total MACs.
Inference latency is per image (batch\,=\,1).
Best results highlighted for \textcolor{green!80}{buffer-free} and \textcolor{cyan!70}{memory-based}.}
\label{tab:compute_all}
\centering
\setlength{\tabcolsep}{3pt}
\resizebox{.95\textwidth}{!}{\begin{tabular}{@{}l|rrrrr|rrrrr|rrrrr@{}}
\toprule
& \multicolumn{5}{c|}{\textbf{CIFAR-100}}
& \multicolumn{5}{c|}{\textbf{Tiny-ImageNet}}
& \multicolumn{5}{c}{\textbf{ImageNet-1k}} \\
Method
& FLOPs$\downarrow$ & Train$\downarrow$ & GPU$\downarrow$ & Infer$\downarrow$ & ACC$\uparrow$
& FLOPs$\downarrow$ & Train$\downarrow$ & GPU$\downarrow$ & Infer$\downarrow$ & ACC$\uparrow$
& FLOPs$\downarrow$ & Train$\downarrow$ & GPU$\downarrow$ & Infer$\downarrow$ & ACC$\uparrow$ \\
& (TF) & Time & (GB) & (ms) & (\%)
& (TF) & Time & (GB) & (ms) & (\%)
& (PF) & Time & (GB) & (ms) & (\%) \\
\midrule
SI
& \cellcolor{green!20}\textbf{4,270} & \cellcolor{green!20}\textbf{24 min} & \cellcolor{green!20}\textbf{2.0} & \cellcolor{green!20}\textbf{0.4} & 15.37
& \cellcolor{green!20}\textbf{8,541} & \cellcolor{green!20}\textbf{1.0 h} & \cellcolor{green!20}\textbf{2.4} & \cellcolor{green!20}\textbf{0.6} & 13.37
& \cellcolor{green!20}\textbf{716} & \cellcolor{green!20}\textbf{18 h} & \cellcolor{green!20}\textbf{4.4} & \cellcolor{green!20}\textbf{2.5} & 9.68 \\
EWC
& 4,300 & 25 min & 2.3 & 0.4 & 12.87
& 8,610 & 1.1 h & 2.7 & 0.6 & 10.87
& 718 & 19 h & 4.8 & 2.5 & 7.53 \\
LwF
& 5,550 & 31 min & 2.2 & 0.4 & 19.56
& 11,100 & 1.3 h & 2.6 & 0.6 & 15.56
& 930 & 23 h & 4.6 & 2.5 & 11.24 \\
SpaceNet
& 5,650 & 33 min & 2.3 & 0.4 & 27.10
& 11,300 & 1.4 h & 2.7 & 0.6 & 17.80
& 947 & 25 h & 4.7 & 2.5 & 13.50 \\
PEC
& 5,374 & 45 min & 2.5 & 3.2 & 29.40
& 10,748 & 1.9 h & 2.9 & 5.8 & 19.40
& 901 & 34 h & 4.9 & 18.5 & 14.83 \\
NFL
& 6,735 & 37 min & 2.4 & 0.4 & 41.20
& 13,469 & 1.6 h & 2.8 & 0.6 & 32.50
& 1,129 & 28 h & 4.8 & 2.5 & 27.15 \\
NISPA
& 9,100 & 1.0 h & 3.0 & 0.5 & 43.80
& 18,200 & 2.6 h & 3.4 & 0.7 & 35.40
& 1,526 & 48 h & 5.4 & 2.8 & 29.40 \\
DCNet
& 6,850 & 40 min & 2.5 & 0.5 & 52.85
& 13,700 & 1.7 h & 2.9 & 0.7 & 43.90
& 1,148 & 30 h & 4.9 & 2.6 & 37.80 \\
NFL+
& 7,286 & 50 min & 2.6 & 0.5 & 53.70
& 14,573 & 2.1 h & 3.0 & 0.7 & 44.70
& 1,221 & 38 h & 5.0 & 2.7 & 38.42 \\
\rowcolor{green!8}
\textbf{SNV (ours)}
& 9,035 & 62 min & 2.7 & 0.4 & \cellcolor{green!20}\textbf{54.70}
& 18,071 & 2.6 h & 3.1 & 0.6 & \cellcolor{green!20}\textbf{45.70}
& 1,514 & 47 h & 5.1 & 2.5 & \cellcolor{green!20}\textbf{39.42} \\
\midrule
\multicolumn{16}{c}{\textit{Memory-based methods}} \\
\midrule
DER++
& \cellcolor{cyan!20}\textbf{6,504} & \cellcolor{cyan!20}\textbf{28 min} & 3.6 & \cellcolor{cyan!20}\textbf{0.4} & 21.20
& \cellcolor{cyan!20}\textbf{10,590} & \cellcolor{cyan!20}\textbf{1.2 h} & 4.0 & \cellcolor{cyan!20}\textbf{0.6} & 18.10
& \cellcolor{cyan!20}\textbf{843} & \cellcolor{cyan!20}\textbf{21 h} & 6.0 & \cellcolor{cyan!20}\textbf{2.5} & 14.20 \\
iCaRL
& 6,188 & 1.4 h & \cellcolor{cyan!20}\textbf{3.3} & 0.8 & 21.50
& 10,275 & 3.5 h & \cellcolor{cyan!20}\textbf{3.7} & 1.2 & 15.80
& 823 & 64 h & \cellcolor{cyan!20}\textbf{5.7} & 4.5 & 12.45 \\
DyTox
& 25,092 & 1.7 h & 5.8 & 1.9 & \cellcolor{cyan!20}\textbf{55.80}
& 41,356 & 4.7 h & 7.2 & 3.5 & \cellcolor{cyan!20}\textbf{47.60}
& 3,304 & 95 h & 14.3 & 11.2 & \cellcolor{cyan!20}\textbf{40.15} \\
\bottomrule
\end{tabular}
}
\end{table*}
Fig.~\ref{fig: CAP}.(a) evaluates how model capacity affects accuracy in the TIL setting. The results indicate that capacity plays an important role in SNV, although increasing capacity does not translate into linear performance gains. For example, Fig.~\ref{fig: CAP}.(b), increasing capacity consistently improves performance.  This naturally raises the question: \textit{How efficiently does each method utilize the available capacity?} SNV achieves a markedly superior capacity–accuracy trade-off: using only 18–80 percent of the capacity. Additionally, across both CIFAR-100 and TinyImageNet, higher values of c generally yield improved accuracy, as each task is allocated a larger subnetwork. However, the relationship between capacity and performance is notably non-linear. 
\subsection{Computational Cost Comparison}
A natural concern with Shapley-based importance estimation is cost: Monte Carlo sampling over Neuron coalitions adds computation that simpler proxies like Fisher information avoid entirely. Table~\ref{tab:compute_all} and Figure~\ref{fig: Compute} quantify this overhead across all three benchmarks. The short answer is that the premium is modest. SNV requires roughly $1.24\times$ the FLOPs of NFL+ across CIFAR-100, Tiny-ImageNet, and ImageNet-1k alike, with only $\sim$0.1\,GB additional GPU memory and no increase in inference latency. Within the buffer-free category, SNV's resource profile clusters with NFL+, NISPA, and DCNet, well below the expensive end of the spectrum occupied by DyTox.
\begin{figure}[t]
    \centering
    \includegraphics[width=\linewidth]{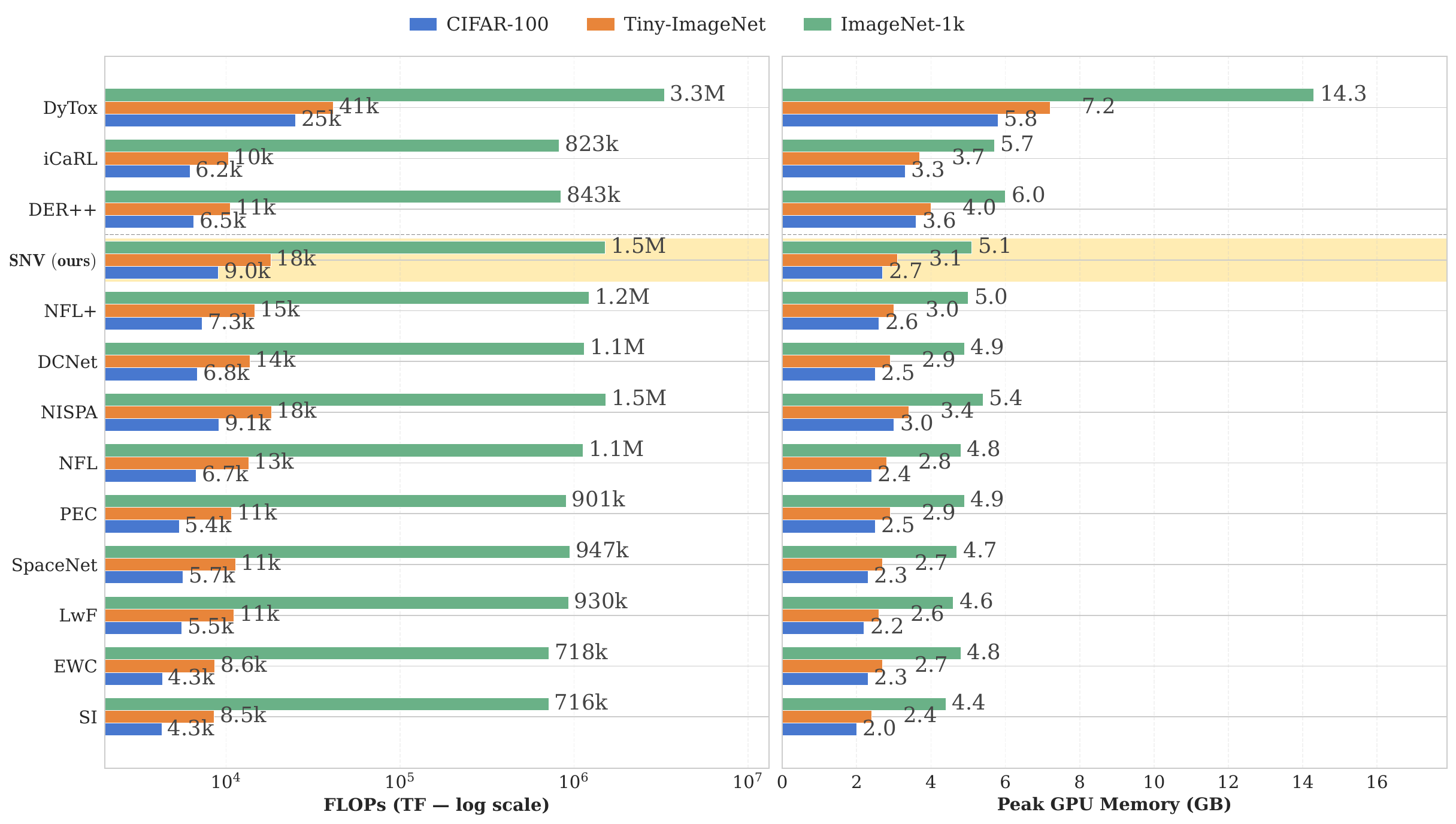}
    \caption{Computational cost of all compared methods across CIFAR-100, Tiny-ImageNet, and ImageNet-1k (Class-IL, 10 tasks). \textbf{Left:} total training FLOPs on a log scale. \textbf{Right:} peak GPU memory.}
    \label{fig: Compute}
\end{figure}
What matters, of course, is what that extra compute \emph{buys}. The cheapest methods in the table, SI and EWC, use under half the FLOPs of SNV, yet their final accuracy collapses to single digits on ImageNet-1k. Cost comparisons are only meaningful among methods that actually learn, and among those, SNV delivers the highest accuracy per FLOP in the buffer-free tier. Perhaps the most striking comparison is against memory-based methods. DyTox reaches $57.40\%$ on CIFAR-100, roughly three points above SNV's $54.70\%$, but at $2.8\times$ the FLOPs, $2.1\times$ the GPU memory, and with access to a stored exemplar buffer that SNV does not use. In other words, the Shapley estimation step is not a luxury; it is a lightweight investment that closes, and sometimes eliminates, the gap to replay-based methods without requiring any stored data.
\section{Conclusion}
This work introduced Shapley Neuron Values, a principled approach to continual learning that requires neither a memory buffer nor architectural expansion. SNV exploits the inherent over-parameterization of modern neural networks and leverages Shapley Values to identify and preserve the Neurons that are most critical for each task. By eliminating the need to store past data, SNV avoids the privacy risks associated with data retention and is therefore well-suited for resource-constrained and privacy-sensitive settings.

\textbf{Limitations.} SNV's main practical limitation is the cost of the post-task Shapley estimation step. This additional phase grows with network width and task complexity, which may become a bottleneck for very large models.

\textbf{Future work.} A promising direction is to eliminate the separate estimation phase for Shapley values. Recently \citep{wang2025data} has shown that Data Shapley values can be approximated online. We aim to adapt this to running Shapley estimates that update alongside the model parameters so that importance scores are available immediately when a task boundary is reached, without any post-hoc computation.

\section*{Acknowledgements}

This research was supported by the TOAST project, funded by the European Union’s Horizon Europe research and innovation program under the Marie Skłodowska-Curie Actions Doctoral Network (Grant Agreement No. 101073465), the Danish Council for Independent Research project eTouch (Grant No. 1127- 00339B), and NordForsk Nordic University Cooperation on Edge Intelligence (Grant No. 168043).

We sincerely appreciate the reviewers for their constructive and thoughtful feedback, which made this work appealing and strong.

Dedicated to the memory of my father.
\section*{Impact Statement}

``This paper presents work whose goal is to advance the field of Machine Learning. There are many potential societal consequences of our work, none which we feel must be specifically highlighted here.''

\bibliography{example_paper}
\bibliographystyle{icml2026}

\newpage
\appendix
\twocolumn
\section{Appendix}
\begin{figure*}[t]
    \centering
    \includegraphics[width=\linewidth]{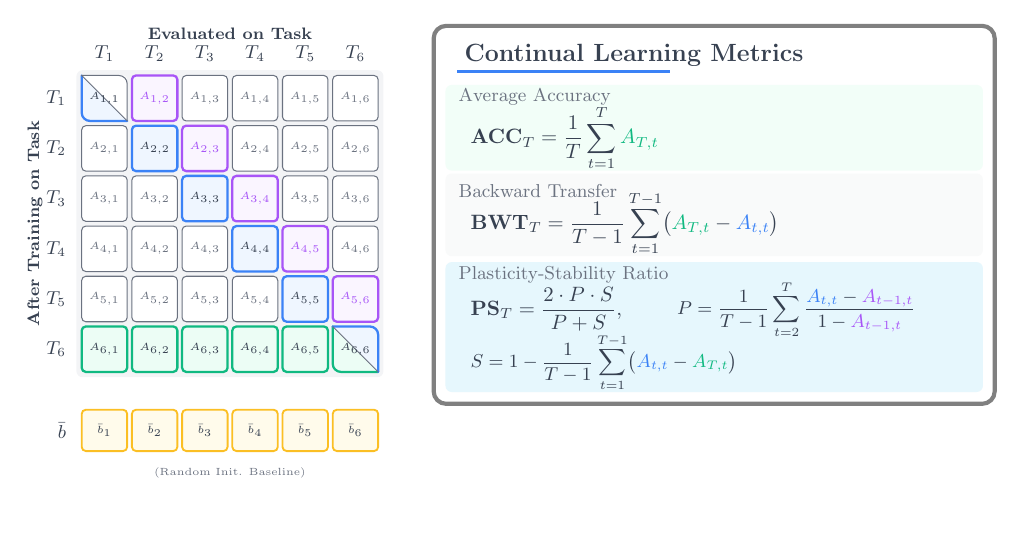}
    \caption{Performance evaluation metrics of continual learning methods. RAC is the Random model ACcuracy.}
    \label{fig: EM}
\end{figure*}
The Appendix mainly contains additional materials and experiments that cannot be reported due to the page limit, which is organized as follows: 
\begin{itemize}
     \item  Section~\ref{RW} contains detailed Related Work of Continual Learning approaches.
    \item Things We Tried That Did Not Work section~\ref{Things We Tried That Did Not Work} summarizes the main ideas and implementation that ultimately failed.
    \item The Evaluation Scenario section~\ref{Evaluation Protocol} outlines CL's primary scenarios used in this paper.
    \item The Evaluation Metrics section~\ref{Evaluation Metrices} provides the mathematical definitions of the metrics used in this study.
    
    \item The Additional Results section~\ref{Additional Results} presents FWT and PS for the TIL scenario, ACC evaluation comparison for the CIL scenario for 20 task, SNV accuracy matrix for the CIL scenario, Layer-wise Shapley Values heatmap, and Neuron mask overlap across 10 tasks for the CIL scenario.
    
    \item The Implementation details of iCaRL section~\ref{Implementation Details of iCaRL} describe how the method was adapted for the TIL scenario, given that its original design specifically targets the CIL setting.

    \item The Hyperparameter section~\ref{Hyperparameter Search} provides details about the best hyperparameters selected, as well as a comprehensive list of all parameter combinations that were evaluated.

\end{itemize}

\section{Related Work}\label{RW} 
\textbf{Memory-based Continual Learning} methods maintain a buffer of past exemplars and interleave them with new task data during training. iCaRL~\citep{rebuffi2017icarl} selects representative samples via herding and combines them with nearest-mean classification. DER++~\citep{buzzega2020dark} stores both inputs and their corresponding logits to preserve inter-class relationships. Architecture-expanding methods offer an alternative: DyTox~\citep{Douillard_DyTox} grows a transformer-based model with task-specific tokens, and MEMO~\citep{zhou2023model} expands deeper layers while sharing shallow features. Despite their effectiveness, memory-based methods that rely on replay buffers suffer from fundamental limitations: \textbf{i. Privacy Concerns} as storing raw samples may violate data protection regulations (e.g., GDPR~\citep{voigt2017gdpr}); \textbf{ii. Memory Overhead} since buffer size typically scales with the number of tasks, and \textbf{iii. Computational Cost} as replay mechanisms increase training time. These constraints substantially limit their applicability in scenarios with strict storage budgets or in cases where regulations prohibit exemplar storage.
 
\textbf{Buffer-free Continual Learning} methods operate without replay buffers and must rely on either regularization or structural constraints to prevent forgetting. EWC~\citep{kirkpatrick2017overcoming} penalizes changes to parameters deemed important by the Fisher Information Matrix (FIM), and SI~\citep{zenke2017continual} accumulates online importance estimates along the optimization trajectory. LwF~\citep{li2017learningforgetting} distills knowledge from the previous model's predictions but degrades under distributional shift across tasks. More recently, SpaceNet~\citep{sokar2021spacenet} exploits network sparsity by freeing unused capacity for new tasks, NISPA~\citep{gurbuz2022nispa} combines neurogenesis-inspired growth with stability-plasticity gating in sparse networks, WSN~\citep{haeyong} targets TIL by assigning binary masks to task-specific parameters, whereas PEC~\citep{zajc2024predictionerrorbasedclassificationclassincremental} expands the network by introducing new modules per class and is limited to CIL. 

DCNet~\citep{wang2025dcnet} maps class representations into an orthogonal hyperspherical space to achieve inter-class separation without exemplars. Additionally, the NFL~\citep{vahedifar2026forgettinglearningbufferfreecontinual} operates entirely within the fixed capacity of a standard network, relying solely on the training schedule to prevent interference without a parameter importance mechanism. While these methods eliminate the replay buffer, they typically introduce auxiliary mechanisms (sparsity masks, expanding modules, or fixed orthogonal embeddings) that constrain the model's flexibility. In contrast, the SNV operates entirely within the fixed capacity of a standard network, relying solely on an importance selection mechanism.
\section{Things We Tried That Did Not Work}\label{Things We Tried That Did Not Work}

\begin{figure*}[t]
    \centering
    \includegraphics[width=\linewidth]{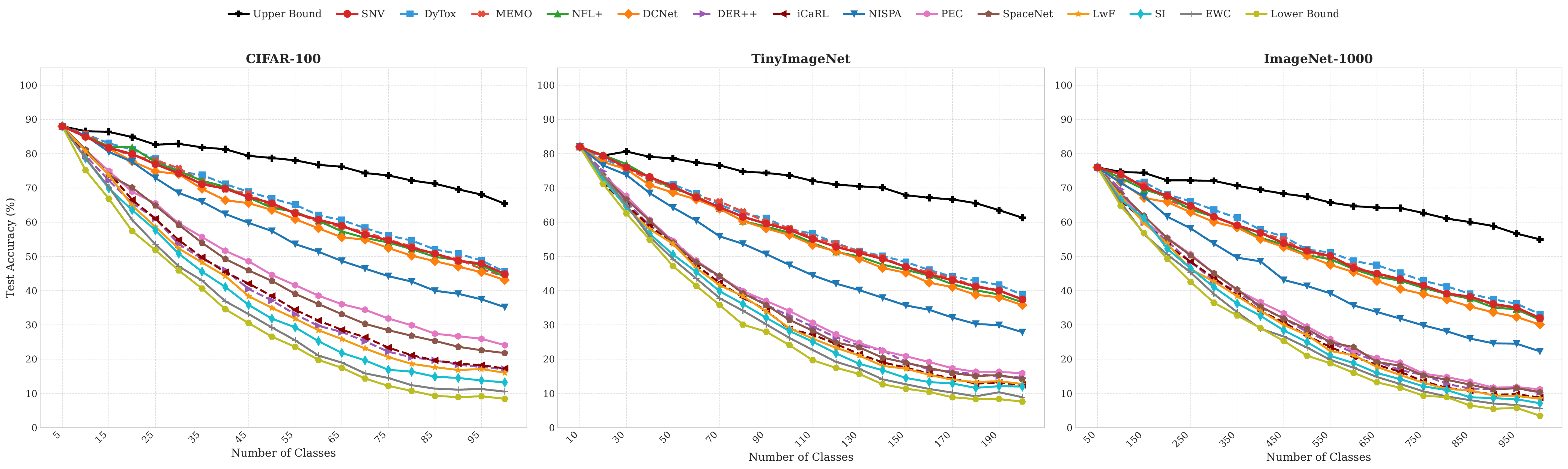}
    \caption{ACC evaluation comparison for the CIL for 20 tasks for each dataset. Each point represents the average classification accuracy evaluated after learning a given task, averaged over all tasks learned up to that point. For example, the value at task 10 corresponds to the average accuracy of the model on the test sets of tasks 1 through 10 after completing training on task 10. For details, see Fig.~\ref{fig: 20task_snv_matrix}. Solid lines: buffer-free; dashed lines: memory-based.}
    \label{fig: CIL 20 task}
\end{figure*}
In this section, we summarize several ideas and implementation strategies we explored but ultimately found ineffective. We include them here to provide transparency and to highlight directions that may appear promising but did not yield practical or stable improvements.

 \textbf{1. Exact Shapley Value Computation.}
We attempted to compute Shapley Values exactly rather than through estimation. Even on a small dataset such as PMNIST, this required one week of computation on four NVIDIA A6000 GPUs and completed only 30 percent of the calculation. Exact computation is therefore computationally infeasible for continual learning.

\textbf{2. Increasing Capacity Beyond 0.5.}
We evaluated model capacities greater than 0.5, but did not observe any meaningful performance improvement. The additional capacity did not translate into better representations or stability.

 \textbf{3. Removing the Cumulative Shapley Mask.}
We experimented with training without the cumulative Shapley mask and found that performance collapsed. This confirmed that the mask is a core mechanism: without it, the network fails to properly freeze parameters associated with previous tasks.

\textbf{4. Allowing a Small Percentage of Frozen Weights to Change.}
We allowed 1, 2, 3, and 5 percent of previously frozen weights to update. Even these small relaxations caused large drops in accuracy, demonstrating the sensitivity of the model to parameter leakage across tasks.

\textbf{5. Disallowing New Tasks from Accessing Frozen Parameters.}
In early experiments, we fully restricted new tasks from using any frozen parameters. This led to substantially worse performance than our final approach, showing that some shared reuse of stabilized parameters is necessary.

\textbf{6. Restricting the Memory Buffer Size.}
We also tested aggressive reductions in the memory buffer size. Performance degraded sharply, reinforcing that, even for buffer-free baselines, some degree of replay is essential in hybrid settings.

\section{Evaluation Scenario}\label{Evaluation Protocol}
The two main experimental scenarios typically used to evaluate the performance of methods are the following: 

 \textbf{1. Task Incremental Learning (TIL):} In TIL, the training data is divided into multiple tasks, each with a unique set of classes. The crucial aspect of TIL is that the model is provided with information about which task it is handling during training and testing. This allows the model to use the computational graph corresponding to each task. For example, if the model is trained to classify images of animals and vehicles, the task label information is also provided for testing on a new image; thus, the network's classification output for the corresponding task will be calculated. This knowledge simplifies the inference task, as the model does not need to consider all possible classes simultaneously ~\citep{wickramasinghe2024continual, vahedifar_2026_18774099}.

 \textbf{2. Class Incremental Learning (CIL):} In CIL, the model is also trained on different tasks, but is not told which task a new sample belongs to during testing. Instead, regardless of the task, the model needs to respond to all the classes it has encountered. This makes CIL more challenging than TIL, as the model must infer the correct class without task-related information. For instance, after training a model to recognize animals and vehicles separately, CIL would test the model on all classes simultaneously (animals and vehicles) without informing the model whether it is currently classifying an animal or a vehicle ~\citep{qu2024recentadvancescontinuallearning, zhou2024continuallearningpretrainedmodels}.
 
Physically, our model utilizes a single output layer matrix $W_{\text{out}} \in \mathbb{R}^{(C_{\text{old}} + C_{\text{new}}) \times F}$, where $\mathrm{F}$ is the feature dimension. Task identifiers are strictly a \textit{training-time} requirement used to define the boundaries of these logical partitions for loss computation. During inference, particularly in the CIL setting, the partitions are ignored, and the single head functions as a global classifier over the union of all classes.
\section{Evaluation Metrices}\label{Evaluation Metrices}

To evaluate continual learning methods, we measure the model's accuracy on all tasks after training on each sequential task $T_t$ using dataset $\mathcal{D}_t$. This produces an accuracy matrix $A \in \mathbb{R}^{T \times T}$, where element $A_{i,j}$ represents the model's accuracy on task $T_j$ after completing training on task $T_i$. Figure~\ref{fig: EM} illustrates the most common metrics in continual learning and how these metrics are computed from the accuracy matrix. 

In addition, we define \textbf{Capacity (CAP)}, which measures the total memory footprint of the continual learning model relative to the dense backbone network. Since our method utilizes structured pruning at the Neuron level, the storage overhead for task-specific binary masks is negligible (orders of magnitude smaller than the model weights). Therefore, unlike previous weight-level pruning methods~\citep{haeyong} that require complex mask compression terms, we define Capacity simply as the percentage of unique network parameters activated by the union of all learned tasks:
\begin{equation}
\text{CAP} = \frac{ \| \bigcup_{t=1}^T \mathcal{M}^* \|_0 }{ |\theta_{dense}| } \times 100\%
\end{equation}
\noindent where $\| \cdot \|_0$ represents the count of non-zero parameters in the union of all task-specific subnetworks, and $|\theta_{dense}|$ is the total parameter count of the dense backbone. A CAP of 100\% implies the model utilizes the full memory capacity of the standard network, while a lower CAP indicates efficient parameter reuse and sparsity.

In summary, an effective continual learning method should maximize ACC, BWT, and PS. When comparing methods with similar ACC values, those with higher PS and BWT are preferable. Note that BWT for the first task is undefined~\citep{Lopez-PazNeurIPS2017_f8752278}.

\section{Additional Results}\label{Additional Results}
This section provides additional experimental results excluded from the main paper due to page limitations.
\begin{table*}[t]
\centering
\caption{Performance comparison on CIFAR-100 and Tiny-ImageNet across 10 and 20 tasks for CIL scenario. Best results are highlighted for \textcolor{green!80}{buffer-free} and \textcolor{cyan!70}{memory-based}.}
\label{tab: Cifar100tinycombined}
\setlength{\tabcolsep}{1.2mm}
\small
\resizebox{\textwidth}{!}{\begin{tabular}{@{}llcccccccccccc@{}}
\toprule
& & \multicolumn{6}{c}{\textbf{CIFAR-100}} & \multicolumn{6}{c}{\textbf{TinyImageNet}} \\
\cmidrule(lr){3-8} \cmidrule(lr){9-14}
& & \multicolumn{3}{c}{10 tasks} & \multicolumn{3}{c}{20 tasks} & \multicolumn{3}{c}{10 tasks} & \multicolumn{3}{c}{20 tasks} \\
\cmidrule(lr){3-5} \cmidrule(lr){6-8} \cmidrule(lr){9-11} \cmidrule(lr){12-14}
 & Method & ACC$\uparrow$ & BWT$\uparrow$ & PS$\uparrow$ & ACC$\uparrow$ & BWT$\uparrow$ & PS$\uparrow$ & ACC$\uparrow$ & BWT$\uparrow$ & PS$\uparrow$ & ACC$\uparrow$ & BWT$\uparrow$ & PS$\uparrow$ \\
\midrule
\multicolumn{14}{c}{\textit{Memory-free methods}} \\
\midrule
& \textbf{SNV (ours)} & \cellcolor{green!20}\textbf{54.70} \scriptsize{$\pm$ 0.42} & \cellcolor{green!20}$\mathbf{-0.04}$ & \cellcolor{green!20}\textbf{0.69} & \cellcolor{green!20}\textbf{44.85} \scriptsize{$\pm$ 0.55} & \cellcolor{green!20}$\mathbf{-0.04}$ & \cellcolor{green!20}\textbf{0.61} & \cellcolor{green!20}\textbf{45.70} \scriptsize{$\pm$ 1.15} & \cellcolor{green!20}$\mathbf{-0.05}$ & \cellcolor{green!20}\textbf{0.62} & \cellcolor{green!20}\textbf{37.47} \scriptsize{$\pm$ 1.38} & \cellcolor{green!20}$\mathbf{-0.04}$ & \cellcolor{green!20}\textbf{0.54} \\
& NFL+ & 53.70 \scriptsize{$\pm$ 2.45} & $-0.05$ & 0.67 & 44.03 \scriptsize{$\pm$ 3.12} & $-0.05$ & 0.60 & 44.70 \scriptsize{$\pm$ 3.82} & $-0.06$ & 0.63 & 36.65 \scriptsize{$\pm$ 4.05} & $-0.07$ & 0.56 \\
& DCNet & 52.85 \scriptsize{$\pm$ 2.70} & $-0.06$ & 0.65 & 43.10 \scriptsize{$\pm$ 3.35} & $-0.06$ & 0.58 & 43.90 \scriptsize{$\pm$ 4.10} & $-0.07$ & 0.60 & 35.80 \scriptsize{$\pm$ 4.22} & $-0.08$ & 0.53 \\
& NISPA & 43.80 \scriptsize{$\pm$ 3.10} & $-0.10$ & 0.58 & 35.20 \scriptsize{$\pm$ 3.65} & $-0.10$ & 0.52 & 35.40 \scriptsize{$\pm$ 4.42} & $-0.12$ & 0.54 & 27.90 \scriptsize{$\pm$ 4.58} & $-0.12$ & 0.47 \\
& SpaceNet & 27.10 \scriptsize{$\pm$ 3.72} & $-0.15$ & 0.48 & 21.80 \scriptsize{$\pm$ 4.08} & $-0.13$ & 0.44 & 17.80 \scriptsize{$\pm$ 4.50} & $-0.18$ & 0.43 & 14.20 \scriptsize{$\pm$ 4.35} & $-0.16$ & 0.40 \\
& PEC & 29.40 \scriptsize{$\pm$ 4.25} & $-0.19$ & 0.52 & 24.11 \scriptsize{$\pm$ 3.95} & $-0.15$ & 0.48 & 19.40 \scriptsize{$\pm$ 4.68} & $-0.22$ & 0.48 & 15.91 \scriptsize{$\pm$ 4.42} & $-0.17$ & 0.42 \\
& LwF & 19.56 \scriptsize{$\pm$ 3.50} & $-0.18$ & 0.40 & 16.04 \scriptsize{$\pm$ 4.15} & $-0.16$ & 0.38 & 15.56 \scriptsize{$\pm$ 1.59} & $-0.20$ & 0.39 & 12.76 \scriptsize{$\pm$ 2.84} & $-0.17$ & 0.39 \\
& SI & 15.37 \scriptsize{$\pm$ 6.25} & $-0.25$ & 0.42 & 13.23 \scriptsize{$\pm$ 5.73} & $-0.20$ & 0.41 & 13.37 \scriptsize{$\pm$ 4.87} & $-0.27$ & 0.47 & 12.02 \scriptsize{$\pm$ 5.21} & $-0.22$ & 0.43 \\
& EWC & 12.87 \scriptsize{$\pm$ 1.39} & $-0.25$ & 0.39 & 10.55 \scriptsize{$\pm$ 2.46} & $-0.20$ & 0.37 & 10.87 \scriptsize{$\pm$ 3.45} & $-0.26$ & 0.44 & 8.91 \scriptsize{$\pm$ 4.12} & $-0.22$ & 0.40 \\
\midrule
\multicolumn{14}{c}{\textit{Memory-based methods}} \\
\midrule
& DyTox & \cellcolor{cyan!20}\textbf{57.40} \scriptsize{$\pm$ 2.52} & \cellcolor{cyan!20}$\mathbf{-0.37}$ & \cellcolor{cyan!20}\textbf{0.74} & \cellcolor{cyan!20}\textbf{47.07} \scriptsize{$\pm$ 3.22} & \cellcolor{cyan!20}$\mathbf{-0.30}$ & \cellcolor{cyan!20}\textbf{0.65} & \cellcolor{cyan!20}\textbf{49.40} \scriptsize{$\pm$ 3.28} & \cellcolor{cyan!20}$\mathbf{-0.39}$ & \cellcolor{cyan!20}\textbf{0.70} & \cellcolor{cyan!20}\textbf{40.51} \scriptsize{$\pm$ 3.62} & \cellcolor{cyan!20}$\mathbf{-0.32}$ & \cellcolor{cyan!20}\textbf{0.61} \\
& iCaRL & 25.41 \scriptsize{$\pm$ 4.08} & $-0.71$ & 0.44 & 20.84 \scriptsize{$\pm$ 4.42} & $-0.58$ & 0.40 & 18.41 \scriptsize{$\pm$ 3.02} & $-0.73$ & 0.41 & 15.10 \scriptsize{$\pm$ 4.00} & $-0.60$ & 0.37 \\
& DER++ & 25.30 \scriptsize{$\pm$ 2.72} & $-0.72$ & 0.42 & 20.75 \scriptsize{$\pm$ 3.50} & $-0.59$ & 0.39 & 21.30 \scriptsize{$\pm$ 3.72} & $-0.71$ & 0.44 & 17.47 \scriptsize{$\pm$ 4.15} & $-0.58$ & 0.40 \\
\bottomrule
\end{tabular}}
\end{table*}

\subsection{Task Granularity Analysis}
Comparing the results in Fig.~\ref{fig: CIL 10 task} with those in Fig.~\ref{fig: CIL 20 task} reveals that increasing the number of tasks leads to clear performance degradation, and this decline is substantially larger than the drop observed in the upper bound. This indicates that as the number of tasks grows, continual learning methods struggle to adequately prevent catastrophic forgetting.

Another notable finding arises from limiting the memory buffer for memory-based methods to 200 exemplars in this analysis. Under this constraint, the performance gap between NFL+, SNV, and other memory-based approaches such as DER++ and iCaRL becomes much more noticeable. This suggests that memory-based methods rely heavily on sufficiently large replay buffers, and that memory alone is not always the solution. DyTox, due to its dynamic architecture, is more resilient to restricted memory, yet its performance remains comparable to SNV in both the 10-task and 20-task settings. These observations raise several important questions for the community.
\begin{figure*}[t]
    \centering
    \includegraphics[width=\linewidth]{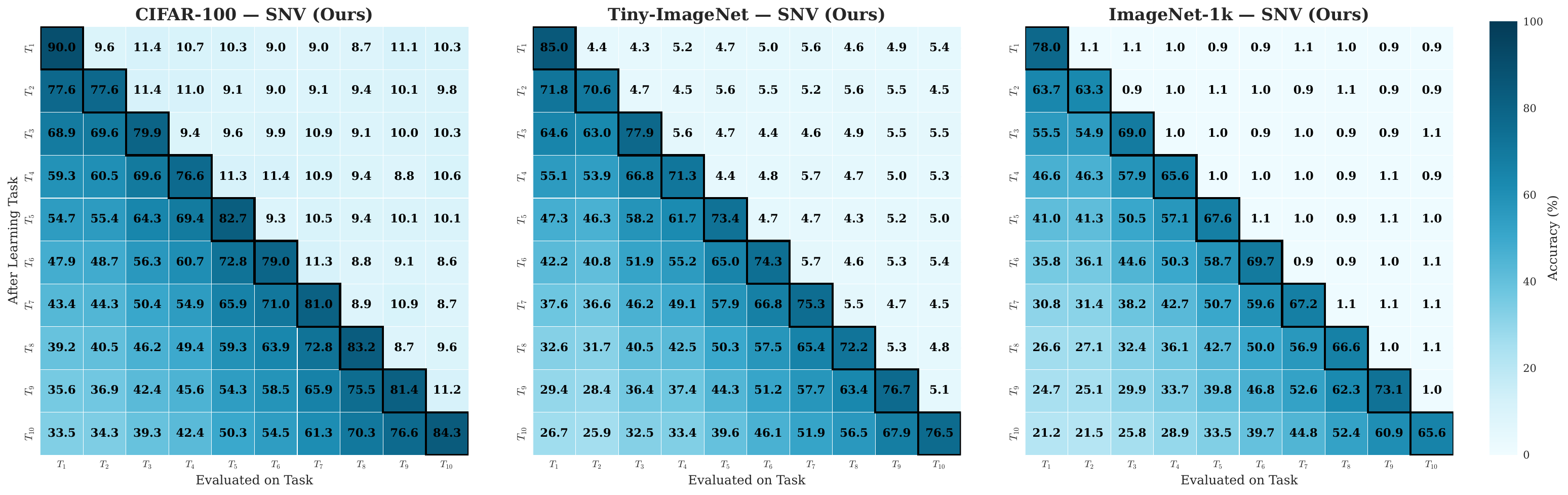}
    \caption{ACC matrix for SNV for the CIL for 10 tasks for each dataset.}
    \label{fig:10task_snv_matrix}
\end{figure*}
\begin{figure*}[t]
    \centering
    \includegraphics[width=\linewidth]{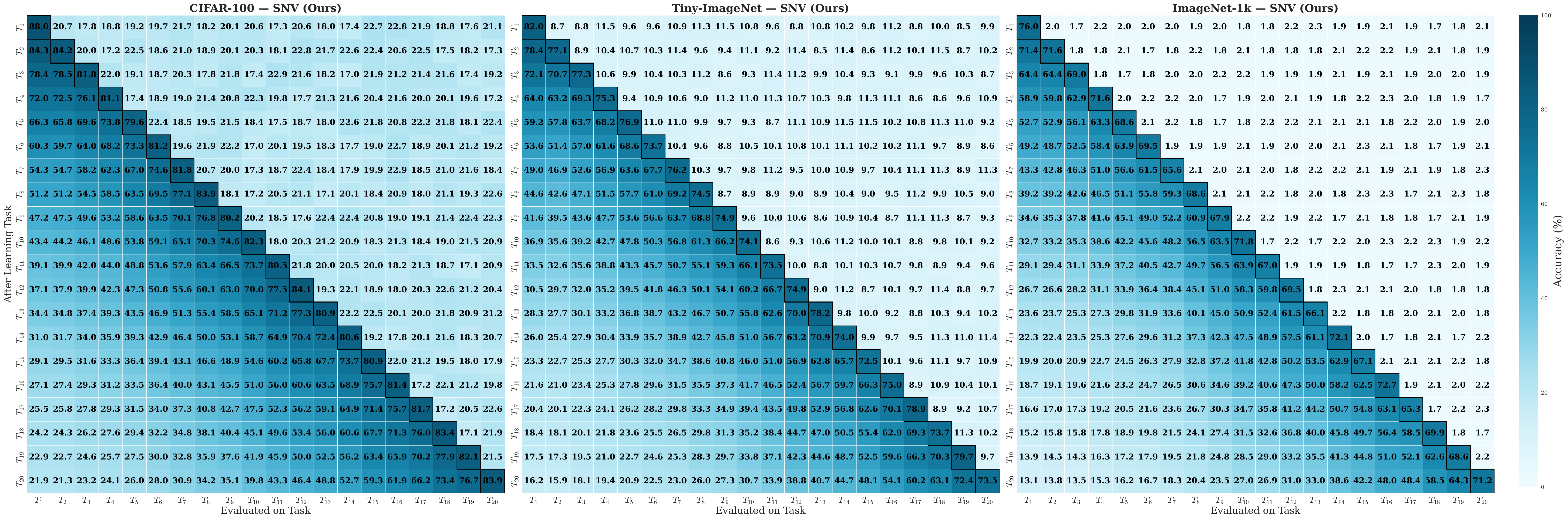}
    \caption{ACC matrix for SNV for the CIL for 20 tasks for each dataset.}
    \label{fig: 20task_snv_matrix}
\end{figure*}

\textit{How much memory should we allocate to memory-based methods? And what should be the composition of the memory buffer?}

\textit{Are these truly continual learning methods, or do they simply perform sequential learning with replay?}

\textit{Is it fair to compare memory-based and buffer-free approaches under current evaluation protocols?}

\textit{Are memory-based approaches even applicable in real-world scenarios where privacy constraints, data retention policies, or legal regulations restrict storing past examples?}

\subsection{CIL Analysis for CIFAR-100 and Tiny-ImageNet}
Table~\ref{tab: Cifar100tinycombined} summarizes performance on CIFAR-100 and Tiny-ImageNet under both 10- and 20-task Class-IL splits. Three observations stand out. First, SNV leads every buffer-free method on every metric in every setting. On CIFAR-100 with 10 tasks, it reaches $54.70\%$, one point above NFL+ and nearly two points above DCNet, while maintaining a BWT of $-0.04$. The gap widens as the number of tasks grows: at 20 tasks, SNV still holds $44.85\%$, whereas NFL+ drops to $44.03\%$ and DCNet to $43.10\%$, both with noticeably worse backward transfer. The same pattern repeats on Tiny-ImageNet, confirming that the advantage is not dataset-specific.

Second, the variance tells an important story. SNV's standard deviations are consistently the smallest in the buffer-free group ($\pm 0.42$ on CIFAR-100/10 tasks versus $\pm 2.45$ for NFL+ and $\pm 2.70$ for DCNet). Shapley-based importance estimation evidently produces more stable Neuron selections across random seeds than the heuristics used by competing methods, which is a practical benefit that raw accuracy alone does not capture.

Third, SNV closes most of the gap to memory-based methods without storing a single exemplar. DyTox, the strongest replay method, reaches $57.40\%$ on CIFAR-100 at 10 tasks, only $2.7$ points above SNV, yet it does so with a BWT of $-0.37$, an order of magnitude worse than SNV's $-0.04$. In other words, DyTox buys its accuracy at the cost of substantial forgetting that a replay buffer must continuously patch. On Tiny-ImageNet the story is similar: DyTox leads by $3.7$ points in accuracy but trails by $0.34$ in backward transfer. For any deployment where data retention is restricted, whether by privacy regulation, memory constraints, or both, SNV offers a compelling trade-off: nearly equivalent accuracy with essentially no forgetting and no stored data.

\subsection{ACC matrix for SNV}
The ACC matrices in Fig.~\ref{fig:10task_snv_matrix} and Fig.~\ref{fig: 20task_snv_matrix} reveal a clear pattern: catastrophic forgetting intensifies as both the number of tasks and dataset complexity increase. In the 10-task CIFAR-100 matrix, Task 1 accuracy degrades from 90.0\% (after learning $T_1$) to 33.5\% (after learning T10), representing a 56.5\% point drop. However, in the 20-task CIFAR-100 scenario, Task 1 accuracy plummets from 88.0\% to 21.9\%, a 66.1\% drop spread over twice the training horizon. TinyImageNet exhibits even more severe forgetting: the 10-task final $T_1$ accuracy is 26.7\%, while the 20-task version reaches just 16.2\%. 

A striking observation across all matrices is the non-linear forgetting trajectory; early tasks experience rapid accuracy degradation in the first few subsequent tasks, then stabilize somewhat as training progresses. For instance, in the 20-task ImageNet-1k matrix, $T_1$ drops from 76.0\% $\rightarrow$ 71.4\% $\rightarrow$ 64.4\% in the first three steps (steep decline), but the later decrements become more gradual. The diagonal entries (representing plasticity) remain consistently high across all scenarios, regardless of task count, indicating SNV preserves strong learning capacity. The upper-triangular values hover near a random baseline, confirming minimal positive transfer in class-incremental settings where future classes haven't been seen.
\begin{figure*}[t]
    \centering
    \includegraphics[width=\linewidth]{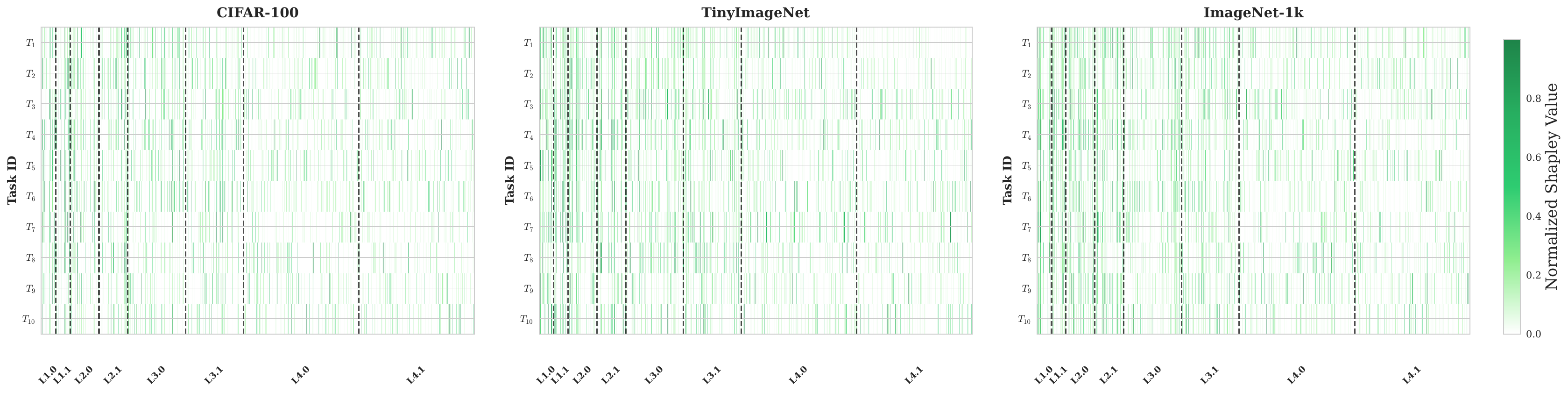}
    \caption{Layer-wise Shapley Neuron Importance Across Datasets.}
    \label{fig: Shapley heatmap}
\end{figure*}
\begin{figure*}[t]
    \centering
    \includegraphics[width=\linewidth]{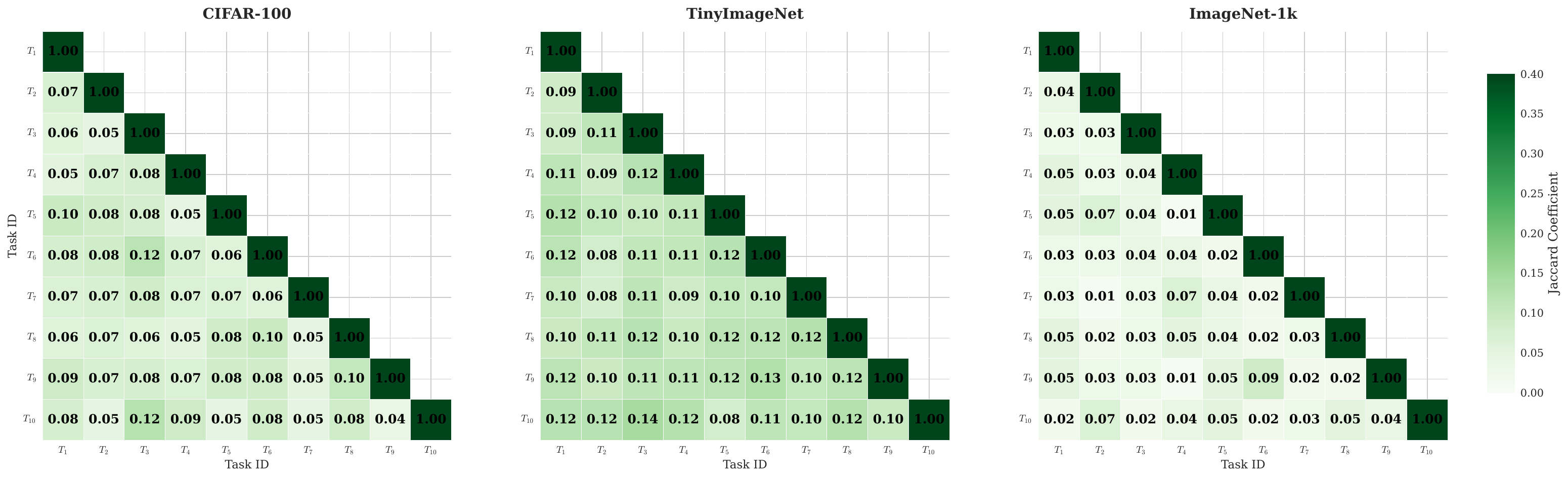}
    \caption{Mask overlap between tasks across datasets for 10 tasks in the CIL scenario.}
    \label{fig: mask overlap}
\end{figure*}

\subsection{Shapley Value Heatmap Analysis}
The Shapley value heatmaps in Fig.~\ref{fig: Shapley heatmap} provide a diagnostic visualization of Neuron importance distribution across tasks and network layers. This analysis reveals the core mechanism underlying SNV's selective parameter protection strategy. The heatmaps exhibit pronounced sparsity, with the majority of Neurons displaying low Shapley Values (white/light green regions) while a minority demonstrate high importance (dark green). This sparsity increases progressively with task index, reflecting the cumulative ``Freezing'' of critical Neurons as training advances. Mathematically, if we denote the sparsity ratio at task $t$ as $\rho_t$, we observe $\rho_t > \rho_{t-1}$, indicating that fewer Neurons remain available for plasticity as the task sequence progresses.

Notably, Certain Neurons maintain consistently high Shapley Values across all tasks, appearing as vertical streaks in the heatmap. These Neurons encode generalizable features, such as edge detectors and texture patterns, that benefit multiple tasks simultaneously. For ResNet-18 architectures, the visualization spans eight BasicBlocks with channel dimensions $\{64, 64, 128, 128, 256, 256, 512, 512\}$. Early layers (\texttt{layer1.x}) typically exhibit more uniform importance distributions, as they capture low-level features shared across tasks. In contrast, later layers (\texttt{layer4.x}) demonstrate greater task-specific variability, reflecting their role in encoding discriminative, class-specific representations.

\subsection{Task Mask Overlap Analysis}
The mask overlap heatmap in Fig.~\ref{fig: mask overlap}, quantifies the degree of Neuron sharing across tasks using the Jaccard coefficient, defined as $J(M_i, M_j) = |M_i \cap M_j| / |M_i \cup M_j|$, where $M_i$ and $M_j$ denote the binary importance masks for tasks $T_i$ and $T_j$, respectively. This metric ranges from 0 (completely disjoint Neuron subsets) to 1 (identical masks), with diagonal entries trivially equal to unity. The off-diagonal values reveal critical insights into SNV's Neuron allocation strategy: moderate overlap values (typically $0.3$--$0.5$) indicate that SNV achieves an effective balance between \textit{knowledge sharing} and \textit{interference avoidance}. Shared Neurons facilitate positive forward transfer by reusing generalizable features across tasks, while distinct Neuron subsets prevent catastrophic forgetting by isolating task-specific representations. Excessively high overlap would indicate that new tasks overwrite previously important Neurons, leading to severe forgetting; conversely, excessively low overlap would suggest inefficient capacity utilization and limited transfer.
\subsection{Wall-clock breakdown}
A common concern with Shapley-based methods is computational cost. Tables~\ref{tab:snv_breakdown} and~\ref{tab:ablation_tmab} address this directly. The per-transition valuation overhead is modest: on CIFAR-100 it adds 1.3 minutes on top of a 5-minute training phase, and on Tiny-ImageNet 4.2 minutes on top of 11.8 minutes (Table~\ref{tab:snv_breakdown}). The bulk of this time is spent on Monte Carlo forward passes; truncation and multi-armed bandit (MAB) sampling together account for less than a tenth of one task's training cost. Over a full 10-task sequence, the valuation phase totals 11.7 minutes for CIFAR-100 and 37.8 minutes for Tiny-ImageNet, roughly $23\%$ and $24\%$ of the overall runtime, respectively. For comparison, EWC completes faster but reaches only $12.87\%$ accuracy on CIFAR-100 versus SNV's $54.70\%$.

A natural question is how much accuracy the approximation sacrifices. Table~\ref{tab:ablation_tmab} answers this by peeling back the acceleration layers one at a time. Pure Monte Carlo sampling (row~a) reaches $54.92\%$ but takes 97 minutes. Adding truncation (row~b) halves the per-transition cost with only a $0.08$ point accuracy drop. The full TMAB-Shapley pipeline (row~c) halves it again, $4\times$ faster than the baseline, while losing just $0.22$ points overall. The Shapley score standard deviation does rise from $0.008$ to $0.018$, but this has a negligible impact on Neuron ranking: the importance ordering remains stable across all five seeds, as reflected in the tight accuracy variance ($\pm 0.23$). In short, the speedup is nearly free.
\begin{table}[t]
\caption{Per-task wall-clock breakdown for SNV.
Task\,1 requires training only; tasks\,2--10 additionally
run the TMAB-Shapley valuation phase.}
\label{tab:snv_breakdown}
\centering
\small
\setlength{\tabcolsep}{1pt}
\begin{tabular}{lcc|cc}
\toprule
& \multicolumn{2}{c|}{\textbf{CIFAR-100}}
& \multicolumn{2}{c}{\textbf{TinyImageNet}} \\
Phase & Frac. & Time & Frac. & Time \\
\midrule
Task training (per task)  & 1.00$\times$ & 5.0\,min & 1.00$\times$ & 11.8\,min \\
\midrule
\multicolumn{5}{l}{\textit{Valuation overhead (per transition, tasks\,2--10 only):}} \\
\quad MC forward passes   & 0.18$\times$ & 54\,s    & 0.25$\times$ & 3.0\,min  \\
\quad Truncation + MAB    & 0.08$\times$ & 24\,s    & 0.10$\times$ & 1.2\,min  \\
\quad Gradient mask update& 0.01$\times$ & 3\,s     & 0.01$\times$ & 8\,s      \\
\cmidrule{2-5}
\textbf{Valuation total}  & \textbf{0.26$\times$} & \textbf{1.3\,min}
                          & \textbf{0.36$\times$} & \textbf{4.2\,min} \\
\midrule
\multicolumn{5}{l}{\textit{Full CL sequence:}} \\
\quad Training (10 tasks)            & & 50.0\,min & & 118\,min \\
\quad Valuation (9 transitions)      & & 11.7\,min & & 37.8\,min \\
\cmidrule{2-5}
\quad \textbf{SNV total}             & & \textbf{62\,min} & & \textbf{2.6\,h} \\
\quad EWC total (for reference)      & & 27\,min   & & 1.1\,h \\
\bottomrule
\end{tabular}

\vspace{0.3em}
{\footnotesize Frac.: relative to one task's training time.}
\end{table}
\begin{table}[t]
\caption{Ablation study on TMAB-Shapley approximation layers
(CIFAR-100, CIL, 10 tasks).}
\label{tab:ablation_tmab}
\centering
\small
\setlength{\tabcolsep}{1pt}
\begin{tabular}{@{}lccccc@{}}
\toprule
Approximation   & Valuation   & Total   & ACC        & ACC  & Shapley \\
Variant         & / trans.    & Time    & (\%)       & std. & std.    \\
\midrule
(a) MC-only
& 5.2\,min & 97\,min  & 54.92 & $\pm$0.18 & 0.008 \\
(b) MC + Truncation
& 2.6\,min & 73\,min  & 54.84 & $\pm$0.21 & 0.012 \\
\rowcolor{cyan!8}
(c) \textbf{Full TMAB-Shapley}
& \textbf{1.3\,min} & \textbf{62\,min} & \textbf{54.70} & $\pm$\textbf{0.23} & \textbf{0.018} \\
\bottomrule
\end{tabular}

\vspace{0.3em}
{\footnotesize
\emph{Valuation / trans.}: wall-clock per task transition
(tasks\,2--10 only).
\emph{Total Time}: full 10-task CL sequence
(training + valuation).
\emph{Shapley std.}: standard deviation of normalized
Neuron importance scores across runs (lower = more stable).
The ${\sim}0.22\%$ accuracy gap between (a) and (c) confirms
that MAB acceleration and truncation preserve estimation
quality while delivering a $4\times$ speedup.}
\end{table}
\section{Implementation Details of iCaRL}\label{Implementation Details of iCaRL}

We adapt the iCaRL method~\citep{rebuffi2017icarl}, originally designed for CIL, to the TIL setting by modifying its classification strategy.

In its standard formulation, iCaRL assigns a label $y^*$ by computing the nearest neighbor between the input's feature representation and the mean exemplar features of each class. Let $\overline{\mathbf{y}}$ denote the mean feature vector computed from stored exemplars of class $y$, and let ${\phi}(\mathbf{x})$ represent the feature embedding of input $\mathbf{x}$. The original iCaRL classification rule is:
\begin{equation}
y^* = \underset{y=1,\dots,t}{\operatorname{argmin}} \|{\phi}(\mathbf{x}) - \overline{\mathbf{y}}\|_2.
\label{eq:icarl_original_academic}
\end{equation}

We reformulate this as a scoring function $h(\mathbf{x})$ that computes negative Euclidean distances between the input features and a matrix ${\mathbf{\Phi}}$ containing all class prototype vectors:
\begin{equation}
h(\mathbf{x}) = -\|{\phi}(\mathbf{x}) - {\mathbf{\Phi}}\|_2.
\label{eq:icarl_modified_response_academic}
\end{equation}

This reformulation preserves equivalence with Eq.~\ref{eq:icarl_original_academic} under CIL evaluation, as maximizing $h(\mathbf{x})$ produces identical predictions to minimizing the distance in the original formulation.

\section{Hyperparameter Search}\label{Hyperparameter Search}
We present the complete hyperparameter search space in Table~\ref{tab:hyperparameters_complete} and the selected best hyperparameter combinations for each method in Table~\ref{tab:best_hyperparameters}. We denote the learning rate with $lr$, weight decay with $wd$, and Adam optimizer hyperparameters with $\beta_1$, $\beta_2$, and $\epsilon$. For NFL+ Auto-Encoder training, we use separate Adam parameters denoted with $lr_{adam}$. All methods are optimized using the Adam optimizer with default parameters: $\beta_1 = 0.9$, $\beta_2 = 0.999$, $\epsilon = 10^{-8}$. 

This first-task HPO strategy avoids the two principal pitfalls identified in prior work: (i)~the conventional protocol of tuning and evaluating within the same scenario, which~\citet{cha2025hyperparameters} show leads to significant overestimation of CL capacity across more than 8{,}000 experiments; and (ii)~end-of-training HPO, which \citet{lee2024hyperparameter} demonstrates is unrealistic because it requires repeated passes over the entire task stream. We emphasize that this concern disproportionately affects memory-based methods, whose performance depends critically on buffer management hyperparameters (e.g., reservoir sampling rate, replay frequency, exemplar selection strategy). Tuning these parameters on the evaluation scenario implicitly leaks information about the distribution of future tasks into the buffer policy, precisely the information leakage that~\citet{cha2025hyperparameters} identify as the root cause of overestimation. NFL+ is \emph{structurally exempt} from this failure mode: as a buffer-free method, it maintains no exemplar buffer and therefore has no buffer management hyperparameters to tune. Its hyperparameters govern the optimization dynamics rather than data selection and are determined entirely by the first task.

\begin{table*}[t]
\centering
\caption{Complete hyperparameter search space for all methods across datasets. All methods use the Adam optimizer with shared parameters $\beta_1 \in \{0.9\}$, $\beta_2 \in \{0.999\}$, $\epsilon \in \{10^{-8}\}$. For CIL, memory-based methods use a buffer of 2,000 (CIFAR-100, Tiny-ImageNet) or 20,000 (ImageNet-1k). For TIL, the buffer is fixed at 200.}
\label{tab:hyperparameters_complete}
\setlength{\tabcolsep}{2.5pt}
\tiny
\begin{tabular}{llll}
\toprule
\textbf{Method} & \textbf{Scenario} & \textbf{Dataset} & \textbf{Hyperparameters (searched on first task only)} \\
\midrule
\multirow{3}{*}{\textbf{SNV (ours)}} & \multirow{3}{*}{CIL/TIL}
  & CIFAR-100      & $lr$: [0.0001, 0.001, 0.01, 0.03, 0.1],
    $\quad c$: [0.05, 0.1, 0.2],
    $\quad \tau$: [0.01, 0.05, 0.1],
    $\quad \alpha$: [0.90, 0.95, 0.99] \\
 &  & Tiny-ImageNet  & $lr$: [0.0001, 0.001, 0.01, 0.03, 0.1],
    $\quad c$: [0.05, 0.1, 0.2],
    $\quad \tau$: [0.01, 0.05, 0.1],
    $\quad \alpha$: [0.90, 0.95, 0.99] \\
 &  & ImageNet-1k    & $lr$: [0.0001, 0.001, 0.01, 0.03, 0.1],
    $\quad c$: [0.05, 0.1, 0.2],
    $\quad \tau$: [0.01, 0.05, 0.1],
    $\quad \alpha$: [0.90, 0.95, 0.99] \\
\midrule
\multirow{6}{*}{NFL+} & \multirow{6}{*}{CIL/TIL} & CIFAR-100 & $lr$: [0.0001, 0.001, 0.01, 0.03, 0.1], $\quad p$: [2.0, 3.0, 4.0], $\quad \Omega$: [0.1, 0.3, 0.5, 1.0], $\quad \eta$: [0.3, 0.5, 0.7] \\
 &  &  & $lr_{\text{ae}}$: [0.0001, 0.001, 0.01] \\
 &  & Tiny-ImageNet & $lr$: [0.0001, 0.001, 0.01, 0.03, 0.1], $\quad p$: [2.0, 3.0, 4.0], $\quad \Omega$: [0.1, 0.3, 0.5, 1.0], $\quad \eta$: [0.3, 0.5, 0.7] \\
 &  &  & $lr_{\text{ae}}$: [0.0001, 0.001, 0.01] \\
 &  & ImageNet-1k & $lr$: [0.0001, 0.001, 0.01, 0.03, 0.1], $\quad p$: [2.0, 3.0, 4.0], $\quad \Omega$: [0.1, 0.3, 0.5, 1.0], $\quad \eta$: [0.3, 0.5, 0.7] \\
 &  &  & $lr_{\text{ae}}$: [0.0001, 0.001, 0.01] \\
\midrule
\multirow{3}{*}{DCNet} & \multirow{3}{*}{CIL/TIL} & CIFAR-100 & $lr$: [0.0001, 0.001, 0.01, 0.03, 0.1], $\quad \lambda_{\text{dc}}$: [0.1, 0.5, 1.0, 5.0] \\
 &  & Tiny-ImageNet & $lr$: [0.0001, 0.001, 0.01, 0.03, 0.1], $\quad \lambda_{\text{dc}}$: [0.1, 0.5, 1.0, 5.0] \\
 &  & ImageNet-1k & $lr$: [0.0001, 0.001, 0.01, 0.03, 0.1], $\quad \lambda_{\text{dc}}$: [0.1, 0.5, 1.0, 5.0] \\
\midrule
\multirow{3}{*}{NISPA} & \multirow{3}{*}{CIL/TIL} & CIFAR-100 & $lr$: [0.0001, 0.001, 0.01, 0.03, 0.1], $\quad s$: [0.3, 0.5, 0.7, 0.9], $\quad \lambda_{\text{reg}}$: [0.1, 1.0, 10.0] \\
 &  & Tiny-ImageNet & $lr$: [0.0001, 0.001, 0.01, 0.03, 0.1], $\quad s$: [0.3, 0.5, 0.7, 0.9], $\quad \lambda_{\text{reg}}$: [0.1, 1.0, 10.0] \\
 &  & ImageNet-1k & $lr$: [0.0001, 0.001, 0.01, 0.03, 0.1], $\quad s$: [0.3, 0.5, 0.7, 0.9], $\quad \lambda_{\text{reg}}$: [0.1, 1.0, 10.0] \\
\midrule
\multirow{3}{*}{WSN} & \multirow{3}{*}{TIL} & CIFAR-100 & $lr$: [0.0001, 0.001, 0.01, 0.03] \\
 &  & Tiny-ImageNet & $lr$: [0.0001, 0.001, 0.01, 0.03] \\
 &  & ImageNet-1k & $lr$: [0.0001, 0.001, 0.01, 0.03] \\
\midrule
\multirow{3}{*}{PEC} & \multirow{3}{*}{CIL/TIL} & CIFAR-100 & $lr$: [0.0001, 0.001, 0.01, 0.03], $\quad \lambda_{\text{pec}}$: [0.1, 0.5, 1.0, 5.0] \\
 &  & Tiny-ImageNet & $lr$: [0.0001, 0.001, 0.01, 0.03], $\quad \lambda_{\text{pec}}$: [0.1, 0.5, 1.0, 5.0] \\
 &  & ImageNet-1k & $lr$: [0.0001, 0.001, 0.01, 0.03], $\quad \lambda_{\text{pec}}$: [0.1, 0.5, 1.0, 5.0] \\
\midrule
\multirow{3}{*}{SpaceNet} & \multirow{3}{*}{CIL/TIL} & CIFAR-100 & $lr$: [0.0001, 0.001, 0.01, 0.03], $\quad s_{\text{init}}$: [0.3, 0.5, 0.7], $\quad \lambda_{\text{sp}}$: [0.1, 1.0, 10.0] \\
 &  & Tiny-ImageNet & $lr$: [0.0001, 0.001, 0.01, 0.03], $\quad s_{\text{init}}$: [0.3, 0.5, 0.7], $\quad \lambda_{\text{sp}}$: [0.1, 1.0, 10.0] \\
 &  & ImageNet-1k & $lr$: [0.0001, 0.001, 0.01, 0.03], $\quad s_{\text{init}}$: [0.3, 0.5, 0.7], $\quad \lambda_{\text{sp}}$: [0.1, 1.0, 10.0] \\
\midrule
\multirow{3}{*}{LwF} & \multirow{3}{*}{CIL/TIL} & CIFAR-100 & $lr$: [0.0001, 0.001, 0.01, 0.03, 0.1], $\quad \alpha$: [0.3, 0.5, 1.0, 3.0], $\quad T$: [2.0, 4.0], $\quad wd$: [1e-5, 5e-5] \\
 &  & Tiny-ImageNet & $lr$: [0.0001, 0.001, 0.01, 0.03, 0.1], $\quad \alpha$: [0.3, 0.5, 1.0, 3.0], $\quad T$: [2.0, 4.0], $\quad wd$: [1e-5, 5e-5] \\
 &  & ImageNet-1k & $lr$: [0.0001, 0.001, 0.01, 0.03, 0.1], $\quad \alpha$: [0.3, 0.5, 1.0, 3.0], $\quad T$: [2.0, 4.0], $\quad wd$: [1e-5, 5e-5] \\
\midrule
\multirow{3}{*}{EWC} & \multirow{3}{*}{CIL/TIL} & CIFAR-100 & $lr$: [0.0001, 0.001, 0.01, 0.03, 0.1], $\quad \lambda$: [10, 25, 30, 90, 100], $\quad \gamma$: [0.9, 0.95, 1.0] \\
 &  & Tiny-ImageNet & $lr$: [0.0001, 0.001, 0.01, 0.03, 0.1], $\quad \lambda$: [10, 25, 30, 90, 100], $\quad \gamma$: [0.9, 0.95, 1.0] \\
 &  & ImageNet-1k & $lr$: [0.0001, 0.001, 0.01, 0.03, 0.1], $\quad \lambda$: [10, 25, 30, 90, 100], $\quad \gamma$: [0.9, 0.95, 1.0] \\
\midrule
\multirow{3}{*}{SI} & \multirow{3}{*}{CIL/TIL} & CIFAR-100 & $lr$: [0.0001, 0.001, 0.01, 0.03, 0.1], $\quad c_{SI}$: [0.3, 0.5, 0.7, 1.0], $\quad \xi$: [0.9, 1.0] \\
 &  & Tiny-ImageNet & $lr$: [0.0001, 0.001, 0.01, 0.03, 0.1], $\quad c_{SI}$: [0.3, 0.5, 0.7, 1.0], $\quad \xi$: [0.9, 1.0] \\
 &  & ImageNet-1k & $lr$: [0.0001, 0.001, 0.01, 0.03, 0.1], $\quad c_{SI}$: [0.3, 0.5, 0.7, 1.0], $\quad \xi$: [0.9, 1.0] \\
\midrule
\multirow{3}{*}{DyTox} & \multirow{3}{*}{CIL/TIL} & CIFAR-100 & $lr$: [0.0001, 0.001, 0.01, 0.03, 0.1] \\
 &  & Tiny-ImageNet & $lr$: [0.0001, 0.001, 0.01, 0.03, 0.1] \\
 &  & ImageNet-1k & $lr$: [0.0001, 0.001, 0.01, 0.03, 0.1] \\
\midrule
\multirow{3}{*}{iCaRL} & \multirow{3}{*}{CIL/TIL} & CIFAR-100 & $lr$: [0.0001, 0.001, 0.01, 0.03, 0.1], $\quad wd$: [0, 1e-5, 5e-5, 1e-4] \\
 &  & Tiny-ImageNet & $lr$: [0.0001, 0.001, 0.01, 0.03, 0.1], $\quad wd$: [0, 1e-5, 5e-5, 1e-4] \\
 &  & ImageNet-1k & $lr$: [0.0001, 0.001, 0.01, 0.03, 0.1], $\quad wd$: [0, 1e-5, 5e-5, 1e-4] \\
\midrule
\multirow{3}{*}{DER++} & \multirow{3}{*}{CIL/TIL} & CIFAR-100 & $lr$: [0.0001, 0.001, 0.01, 0.03, 0.1], $\quad \alpha$: [0.1, 0.2, 0.3, 0.5, 1.0], $\quad \beta$: [0.5, 1.0] \\
 &  & Tiny-ImageNet & $lr$: [0.0001, 0.001, 0.01, 0.03, 0.1], $\quad \alpha$: [0.1, 0.2, 0.3, 0.5, 1.0], $\quad \beta$: [0.5, 1.0] \\
 &  & ImageNet-1k & $lr$: [0.0001, 0.001, 0.01, 0.03, 0.1], $\quad \alpha$: [0.1, 0.2, 0.3, 0.5, 1.0], $\quad \beta$: [0.5, 1.0] \\
\bottomrule
\end{tabular}
\end{table*}

\begin{table*}[t]
\centering
\caption{Selected best hyperparameters for all methods across datasets.}
\label{tab:best_hyperparameters}
\setlength{\tabcolsep}{3pt}
\small
\begin{tabular}{lllll}
\toprule
\textbf{Method} & \textbf{Scenario} & \textbf{Dataset} & \textbf{Selected Hyperparameters} \\
\midrule
\multirow{3}{*}{\textbf{SNV (ours)}} & \multirow{3}{*}{CIL/TIL}
  & CIFAR-100      & $lr$: 0.01,  $c$: 0.1,
    $\tau$: 0.05, $\alpha$: 0.95 \\
 &  & Tiny-ImageNet  & $lr$: 0.01,  $c$: 0.1,
    $\tau$: 0.05, $\alpha$: 0.95 \\
 &  & ImageNet-1k    & $lr$: 0.001, $c$: 0.2,
    $\tau$: 0.05, $\alpha$: 0.95 \\
\midrule
\multirow{3}{*}{NFL+} & \multirow{3}{*}{CIL/TIL} & CIFAR-100 & $lr$: 0.01, $p$: 2.0, $\Omega$: 0.5, $\eta$: 0.5, $lr_{\text{ae}}$: 0.001 \\
 &  & Tiny-ImageNet & $lr$: 0.01, $p$: 2.0, $\Omega$: 0.5, $\eta$: 0.5, $lr_{\text{ae}}$: 0.001 \\
 &  & ImageNet-1k & $lr$: 0.001, $p$: 2.0, $\Omega$: 0.5, $\eta$: 0.5, $lr_{\text{ae}}$: 0.001 \\
\midrule
\multirow{3}{*}{DCNet} & \multirow{3}{*}{CIL/TIL} & CIFAR-100 & $lr$: 0.01, $\lambda_{\text{dc}}$: 1.0 \\
 &  & Tiny-ImageNet & $lr$: 0.01, $\lambda_{\text{dc}}$: 1.0 \\
 &  & ImageNet-1k & $lr$: 0.001, $\lambda_{\text{dc}}$: 0.5 \\
\midrule
\multirow{3}{*}{NISPA} & \multirow{3}{*}{CIL/TIL} & CIFAR-100 & $lr$: 0.01, $s$: 0.5, $\lambda_{\text{reg}}$: 1.0 \\
 &  & Tiny-ImageNet & $lr$: 0.01, $s$: 0.5, $\lambda_{\text{reg}}$: 1.0 \\
 &  & ImageNet-1k & $lr$: 0.001, $s$: 0.5, $\lambda_{\text{reg}}$: 1.0 \\
\midrule
\multirow{3}{*}{WSN} & \multirow{3}{*}{TIL} & CIFAR-100 & $lr$: 0.001 \\
 &  & Tiny-ImageNet & $lr$: 0.001 \\
 &  & ImageNet-1k & $lr$: 0.0001 \\
\midrule
\multirow{3}{*}{PEC} & \multirow{3}{*}{CIL} & CIFAR-100 & $lr$: 0.001, $\lambda_{\text{pec}}$: 1.0 \\
 &  & Tiny-ImageNet & $lr$: 0.001, $\lambda_{\text{pec}}$: 1.0 \\
 &  & ImageNet-1k & $lr$: 0.0001, $\lambda_{\text{pec}}$: 0.5 \\
\midrule
\multirow{3}{*}{SpaceNet} & \multirow{3}{*}{CIL/TIL} & CIFAR-100 & $lr$: 0.001, $s_{\text{init}}$: 0.5, $\lambda_{\text{sp}}$: 1.0 \\
 &  & Tiny-ImageNet & $lr$: 0.001, $s_{\text{init}}$: 0.5, $\lambda_{\text{sp}}$: 1.0 \\
 &  & ImageNet-1k & $lr$: 0.0001, $s_{\text{init}}$: 0.5, $\lambda_{\text{sp}}$: 1.0 \\
\midrule
\multirow{3}{*}{LwF} & \multirow{3}{*}{CIL/TIL} & CIFAR-100 & $lr$: 0.01, $\alpha$: 0.5, $T$: 2.0, $wd$: 5e-5 \\
 &  & Tiny-ImageNet & $lr$: 0.01, $\alpha$: 0.5, $T$: 2.0, $wd$: 5e-5 \\
 &  & ImageNet-1k & $lr$: 0.001, $\alpha$: 0.5, $T$: 2.0, $wd$: 5e-5 \\
\midrule
\multirow{3}{*}{EWC} & \multirow{3}{*}{CIL/TIL} & CIFAR-100 & $lr$: 0.01, $\lambda$: 10, $\gamma$: 1.0 \\
 &  & Tiny-ImageNet & $lr$: 0.01, $\lambda$: 10, $\gamma$: 1.0 \\
 &  & ImageNet-1k & $lr$: 0.001, $\lambda$: 25, $\gamma$: 1.0 \\
\midrule
\multirow{3}{*}{SI} & \multirow{3}{*}{CIL/TIL} & CIFAR-100 & $lr$: 0.01, $c_{SI}$: 0.5, $\xi$: 1.0 \\
 &  & Tiny-ImageNet & $lr$: 0.01, $c_{SI}$: 0.5, $\xi$: 1.0 \\
 &  & ImageNet-1k & $lr$: 0.001, $c_{SI}$: 0.5, $\xi$: 1.0 \\
\midrule
\multirow{3}{*}{DyTox} & \multirow{3}{*}{CIL/TIL} & CIFAR-100 & $lr$: 0.001 \\
 &  & Tiny-ImageNet & $lr$: 0.001 \\
 &  & ImageNet-1k & $lr$: 0.0001 \\
\midrule
\multirow{3}{*}{iCaRL} & \multirow{3}{*}{CIL/TIL} & CIFAR-100 & $lr$: 0.01, $wd$: 5e-5 \\
 &  & Tiny-ImageNet & $lr$: 0.01, $wd$: 5e-5 \\
 &  & ImageNet-1k & $lr$: 0.001, $wd$: 5e-5 \\
\midrule
\multirow{3}{*}{DER++} & \multirow{3}{*}{CIL/TIL} & CIFAR-100 & $lr$: 0.01, $\alpha$: 0.2, $\beta$: 0.5 \\
 &  & Tiny-ImageNet & $lr$: 0.01, $\alpha$: 0.2, $\beta$: 1.0 \\
 &  & ImageNet-1k & $lr$: 0.001, $\alpha$: 0.2, $\beta$: 1.0 \\
\bottomrule
\end{tabular}
\end{table*}

\end{document}